\DeclarePairedDelimiter\floor{\lfloor}{\rfloor}
\newcounter{mythmcounter}
\newtheorem{theorem}{Theorem}[mythmcounter]
\newtheorem{corollary}{Corollary}[theorem]
\newtheorem{lemma}[theorem]{Lemma}
\newtheorem{conjecture}[theorem]{Conjecture}
\DeclareMathOperator*{\argmin}{arg\,min}
\DeclareMathOperator*{\argmax}{arg\,max}
\newcommand{\Resources}{\mathcal{R}}
\newcommand{\ResNum}{n}
\newcommand{\Tasks}{T}
\newcommand{\Cost}[2]{C_{{#1}} ({#2})}
\newcommand{\Costs}[2]{C_{{#1}} = \{{#2}\}}
\newcommand{\Mapping}[1]{A_{#1}}
\newcommand{\MappingT}[2]{A^{#2}_{#1}}
\newcommand{\Real}{\mathbb{R}_{\ge 0}}
\newcommand{\Natural}{\mathbb{N}}
\newcommand{\Makespan}{C_\text{max}}
\newcommand{\MakespanT}[1]{C^{#1}_\text{max}}
\newcommand{\Lower}[1]{L_{#1}}
\newcommand{\Upper}[1]{U_{#1}}
\newcommand{\LowV}{l}
\newcommand{\UpV}{u}
\def\BibTeX{{\rm B\kern-.05em{\sc i\kern-.025em b}\kern-.08em
    T\kern-.1667em\lower.7ex\hbox{E}\kern-.125emX}}
\begin{document}

\title{Optimal Task Assignment to Heterogeneous Federated Learning Devices}

\author{
\IEEEauthorblockN{La\'ercio L. Pilla}
\IEEEauthorblockA{\textit{Univ. Paris-Saclay, CNRS, Laboratoire de Recherche en Informatique (LRI)} \\
Orsay, France \\
pilla@lri.fr}
}

\maketitle

\begin{abstract}
Federated Learning provides new opportunities for training machine learning models while respecting data privacy.
This technique is based on heterogeneous devices that work together to iteratively train a model while never sharing their own data.
Given the synchronous nature of this training, the performance of Federated Learning systems is dictated by the slowest devices, also known as stragglers.
In this paper, we investigate the problem of minimizing the duration of Federated Learning rounds by controlling how much data each device uses for training.
We formulate this problem as a makespan minimization problem with identical, independent, and atomic tasks that have to be assigned to heterogeneous resources with non-decreasing cost functions while respecting lower and upper limits of tasks per resource.
Based on this formulation, we propose a polynomial-time algorithm named OLAR and prove that it provides optimal schedules.
We evaluate OLAR in an extensive experimental evaluation using simulation that includes comparisons to other algorithms from the state of the art and new extensions to them.
Our results indicate that OLAR provides optimal solutions with a small execution time.
They also show that the presence of lower and upper limits of tasks per resource erase any benefits that suboptimal heuristics could provide in terms of algorithm execution time.

\end{abstract}

\begin{IEEEkeywords}
    Task Assignment, Scheduling, Federated Learning, Makespan Minimization, Proof of Optimality, Simulation.
\end{IEEEkeywords}

\section{Introduction}


Federated Learning~(FL) is a recent machine learning technique focused on data privacy and security~\cite{mcmahan2017communication, bonawitz2019towards, lim2020federated}.
In a nutshell, FL involves a group of participating devices (mostly heterogeneous mobile devices) working together to iteratively train a machine learning model under the coordination of a central FL server.
The server starts the training by sending an initial model to the devices. 
Each device trains the model with its own local dataset --- which is never shared --- and sends the new model's weights to the server.
The server then aggregates all updates, averages them, and sends the new model weights to the devices.
This process is repeated for a fixed number of rounds, or until the model convergences to a target accuracy.
Overall, this training scheme is also known for making efficient use of the network~\cite{lim2020federated} (as raw data is never communicated) and has been applied at the scale of millions of devices~\cite{bonawitz2019towards}.


The performance (total training time) of Federated Learning is mainly dictated by two factors:
the time each training round takes (\textit{round duration}), and the number of rounds executed.
We will focus on the former, as the latter can be affected by factors such as the model and quality of available data, which are outside the scope of this work.
The time a device takes on training (its \textit{cost}) is related to its communication time with the server and its computation time (which depends on the device's characteristics, the machine learning model, and the amount of local data).
As each round requires a synchronization with the central FL server, the duration of a round is dictated by the slowest participating devices, also known as \textit{stragglers}~\cite{sprague2018asynchronous,mohammadi2019computation,wang2020optimize}.
The main issue with stragglers is that they can be much slower than other devices due to the heterogeneous nature of mobile devices.
For this reason, improving the performance of a training round requires minimizing the execution time on the slowest devices, which can be seen as a makespan minimization problem~\cite{leung2004handbook}.


Although communication times have been thought to dominate the execution time~\cite{mcmahan2017communication, lim2020federated}, recent research has shown that, in practice, computation is the bottleneck~\cite{wang2020optimize}.
In order to control the computation time of a device, one may control how much data (e.g., number of mini-batches) it uses for training based on a lower and upper limit.
This can create more data imbalance in the system, but this has not been an issue in previous works~\cite{mcmahan2017communication}, especially when data is independent and identically distributed~(i.i.d.)~\cite{wang2020optimize}.
Lower limits can enforce that all devices participate at a certain level in the training, while upper limits can be set by the devices (to due limited data or processing time available for training) or by the server (e.g., to avoid draining the battery of a device).

Deciding how much data each device should use for training in a round
is akin to assigning tasks to resources.
Given that scheduling problems are often NP-Complete or NP-Hard~\cite{leung2004handbook, casanova2008parallel}, one may expect that this problem also falls in these complexity classes and try to propose heuristics to find approximate solutions.
However, in this work we show that an optimal assignment of data to FL devices can be found in polynomial time.
We present this problem in a similar formulation to the problem of scheduling identical and independent tasks (1-D data) on heterogeneous resources~\cite{casanova2008parallel}, and we propose an algorithm --- named OLAR --- that computes an optimal assignment. 
In this context, our main contributions are:

\begin{itemize}

    \item[{$\star$}] We present a formulation of this optimization problem with lower and upper limits, and arbitrary, non-decreasing cost functions per resource.
    \item[{$\star$}] We propose a polynomial-time algorithm named OLAR, and we prove its optimality.
    \item[{$\star$}] We evaluate the quality of the assignment and the scheduling time of algorithms from the state of the art over varied scheduling scenarios (number of tasks, resources, kinds of resource cost functions, presence or absence of limits).
    \item[{$\star$}] We provide the algorithms' implementations and experimental data for use and reproduction of the results~\cite{gitrepo}.

\end{itemize}


The remaining of this paper is organized as follows:
Section~\ref{sec:rw} presents related work on FL and scheduling.
Section~\ref{sec:algo} describes the problem formulation, our algorithm and its proof of optimality.
Section~\ref{sec:exp} details our experimental evaluation, and Section~\ref{sec:conc} presents concluding remarks.

\section{Related Work}
\label{sec:rw}

There is a large volume of work on the improvement of Federated Learning mechanisms.
We point our readers to the survey by Lim et al.~\cite{lim2020federated} for a comprehensive view on the subject.
We will focus our attention here to works that can be compared to or that can benefit from OLAR.
We also discuss some points related to device profiling (to obtain costs) and the use of lower and upper limits of tasks (data) per device.

\textbf{Standard FL.}
FederatedAveraging (FedAvg) was proposed by Brendan McMahan et al.~\cite{mcmahan2017communication} when the concept of FL was introduced.
Given an equal distribution of data among devices, FedAvg was shown to reduce the number of rounds for convergence when compared to a baseline distributed stochastic gradient descent mechanism.
As it focuses on the number of rounds for convergence, 
FedAvg does not optimize the rounds' duration.
Given its importance, we consider FedAvg with an equal distribution of data as a baseline for comparison in our experiments (Section~\ref{sec:exp}). 
We also extend it to consider lower and upper limits of tasks per resource.

\textbf{Resource selection.}
Traditionally, the FL server chooses a subset of resources uniformly at random for a round~\cite{mcmahan2017communication}.
Other strategies for choosing resources, such as round robin, proportional to the signal-to-noise ratio~\cite{yang2019scheduling}, and age-based~\cite{yang2020age} have been proposed to try to accelerate convergence.
OLAR can work together with these approaches by assigning tasks to the chosen resources and minimizing round duration.

Nishio and Yonetani~\cite{nishio2019client} aim to limit round duration by selecting only a subset of devices that can participate.
Given a deadline to the round, they propose a greedy heuristic (FedCS) to maximize the number of participant devices.
Similarly, Shi et al.~\cite{shi2020device} propose a greedy heuristic to minimize round duration by choosing participating devices that do not pose a problem for communication.
Xia et al.~\cite{xia2020multi} employ Reinforcement Learning~(RL) to choose devices that help minimize the total training time.
In contrast, OLAR minimizes the round duration by selecting how much data each of the participating devices should use.

\textbf{Other objectives and decisions.}
Wang et al.~\cite{wang2020optimize, wang2020efficient} propose algorithms to minimize the training time and accuracy loss by controlling the assignment of tasks to resources (as we do). 
For the scenario with i.i.d.~data, they propose Fed-LBAP, an algorithm that uses binary search to find a minimal round duration that fits all data required.
We examine Fed-LBAP and an adaptation that considers lower and upper limits in Section~\ref{sec:exp}.
Likewise, Yang et al.~\cite{yang2019energy} use a similar binary search scheme to try to minimize the total training time, but based on parameters of bandwidth, transmission time, and target accuracy.

Zhan et al.~\cite{zhan2020experience} focus on minimizing a combination of total training time and energy consumption using RL to control the processor clock frequency on the devices.
Similarly, Anh et al.~\cite{anh2019efficient} use RL to decide how much data and energy each device should use in order to minimize the total training time and energy consumption on devices.
We plan to investigate if OLAR can be adapted to consider energy in the future.

\textbf{Deadline determination.}
Li et al.~\cite{li2019smartpc} propose a two-level mechanism (\textit{SmartPC}) that aims to balance the total training time and the energy consumption on the devices.
At a global level, it tries to determine a deadline for the round that should be respected by a part of the devices.
In this context, OLAR could be trivially adapted to maximize the number of tasks can be assigned to resources while respecting the chosen deadline.

\textbf{Distributed Learning.}
Mohammadi Amiri and Gunduz~\cite{mohammadi2019computation} present scheduling algorithms and bounds to solve issues with stragglers in the context of Distributed Learning.
Although Distributed Learning is similar to FL in some ways, it requires data to be shared among devices.
As sharing goes against the principles of FL, their algorithms cannot be applied in our context.
In any case, OLAR could be adapted to this scenario (e.g., by adding the cost of data transmission) and still provide optimal assignments.

\textbf{Asynchronous strategies.}
Some approaches avoid stragglers by removing the synchronization barrier used at each round. 
Damaskinos et al.~\cite{damaskinos2020fleet} introduced a new algorithm named \textit{Adaptive Stochastic Gradient Descent} (\textit{AdaSGD}) in order to do asynchronous learning.
Sprague et al.~\cite{sprague2018asynchronous} also propose their own asynchronous FL algorithm.
Ways to translate the benefits of OLAR to asynchronous FL are left as future work.

\textbf{Cost profiling.}
Collecting the cost information from devices (i.e., the time they take to communicate and train with a certain amount of data) could be seen as a barrier to implementing cost-aware scheduling algorithms.
Nonetheless, tools such as I-Prof~\cite{damaskinos2020fleet} could be utilized to solve this issue.
Additionally, Wang et al.~\cite{wang2020efficient} have shown that cost information can be profiled with a high degree of accuracy.

\textbf{Lower and upper limits.}
The effects of setting lower and upper limits to the number of tasks per resource have not been studied previously.
Yet, we see that limits can play an integral part in FL systems.
Upper limits come naturally from limited storage, processing time, or battery in mobile devices.
Limits could be use to enforce fairness constraints~\cite{xia2020multi}, as 
lower limits enforce device participation, while upper limits avoid an overrepresentation of data from some better-performing devices~\cite{lim2020federated}.
Besides, when setting incentives to attract devices to participate in training~\cite{kang2019incentive}, the use of limits can help define the scope of utilization of the devices for users.

\section{Optimal Scheduling Algorithm}
\label{sec:algo}

\stepcounter{mythmcounter}

The presentation of our novel scheduling algorithm for FL is organized as follows: we start by presenting some important definitions and explanations.  
We then present our scheduling algorithm and an analysis of its complexity. 
We finish by proving its optimality. 

\subsection{Definitions}
\label{subsec:defs}

Our scheduling problem can be formulated similarly to the problem of scheduling identical and independent tasks (1-D data) on heterogeneous resources~(\cite{casanova2008parallel}, Chapter 6.1).
Consider $\Tasks \in \Natural $a number of identical, independent, and atomic tasks (e.g., data units, mini-batches), and
a set~$\Resources$ of $\ResNum$ resources (e.g., mobile devices).
Each resource $i \in \Resources$ has its own lower and an upper limit on the number of tasks it can compute ($\Lower{i} \in \Natural$ and $\Upper{i} \in \Natural$, respectively),
and its own non-decreasing cost function $\Cost{i}{\cdot}:\Natural\rightarrow\Real$ that informs the cost of assigning a number of tasks to it.
Our problem is to find a task assignment $\Mapping{i} \in \Natural$ to each resource $i \in \Resources$
that minimizes the makespan $\Makespan$ (Eq.~\eqref{eq:makespan})
while assigning all tasks among the resources (Eq.~\eqref{eq:allmap})
and respecting the lower and upper limits (Eq.~\eqref{eq:all-respect}).
Throughout this text, we use the indexes $i$ and $k$ for resources and tasks, respectively.

\begin{equation}
    \Makespan \coloneqq \max\limits_{i \in \Resources} \Cost{i}{\Mapping{i}} 
    \label{eq:makespan}
\end{equation}

\begin{equation}
    \sum\limits_{i \in \Resources} \Mapping{i} = \Tasks
    \label{eq:allmap}
\end{equation}

\begin{equation}
    \Lower{i} \le \Mapping{i} \le \Upper{i},~\forall i \in \Resources
    \label{eq:all-respect}
\end{equation}

This scheduling problem is based on a few assumptions and ideas.
First, we consider that the cost functions are independent among resources, but they are all non-decreasing (Eq.~\eqref{eq:non-decreasing}).
For FL, the cost functions include the time taken to communicate the model between the FL server and the device, and the time to train the model with a certain amount of data.
Second, for there to be feasible solutions, the sums of the lower and upper limits have to surround $\Tasks$ (Eq.~\eqref{eq:respect-1}), and the lower limit for a resource cannot be greater to its upper limit ($\Lower{i} \le \Upper{i},~\forall i \in  \Resources$).
We could enforce that no resource would receive more tasks than its upper limit by setting the cost of any exceeding tasks as infinity (Eq.~\eqref{eq:inf}).
Third and final, there may be multiple optimal solutions for a given scenario.
Our focus is to find one of them, and not to list them all.

\begin{equation}
    \Cost{i}{k} \le \Cost{i}{k+1},~\forall i \in  \Resources, k \in \Natural
    \label{eq:non-decreasing}
\end{equation}

\begin{equation}
    \LowV \le \Tasks \le \UpV, ~~\LowV \coloneqq \sum\limits_{i \in \Resources} \Lower{i}, ~~ \UpV \coloneqq \sum\limits_{i \in \Resources} \Upper{i}
    \label{eq:respect-1}
\end{equation}

\begin{equation}
    \Cost{i}{k} = + \infty, ~~ \forall i \in \Resources, k > \Upper{i} 
    \label{eq:inf}
\end{equation}

In order to make it easier to explain and prove the optimality of our algorithm,
we extend our notation to consider the makespan (Eq.~\eqref{eq:makespan-t}) and the assignment of tasks to resources (Eq.~\eqref{eq:assignment-t}) at step $t \le \Tasks$.
At this step, $t$ tasks have been assigned to resources.
Accordingly, $\MakespanT{\Tasks} = \Makespan$.

\begin{equation}
    \MakespanT{t} \coloneqq \max\limits_{i \in \Resources} \Cost{i}{\MappingT{i}{t}} 
    \label{eq:makespan-t}
\end{equation}

\begin{equation}
    \sum\limits_{i \in \Resources} \MappingT{i}{t} = t
    \label{eq:assignment-t}
\end{equation}

\subsection{Algorithm}
\label{subsec:algo}

\begin{figure*}[!htb]
    \centering 
    \begin{subfigure}{0.31\textwidth}
      \includegraphics[width=0.8\textwidth]{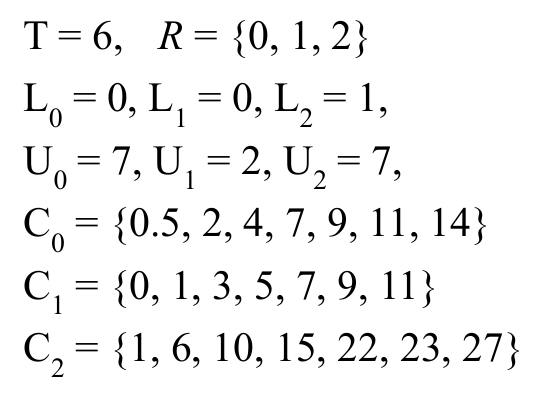}
      \caption{Parameters of the example.}
      \label{fig:0-1}
    \end{subfigure}\hfill 
    \begin{subfigure}{0.31\textwidth}
      \includegraphics[width=0.8\textwidth]{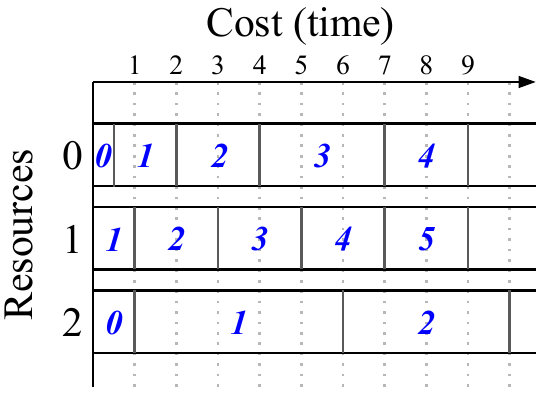}
        \caption{Gantt chart of the costs to map a number of tasks to each resource (\textit{\textcolor{blue}{italics, in blue}}).}
      \label{fig:0-2}
    \end{subfigure}\hfill 
    \begin{subfigure}{0.31\textwidth}
      \includegraphics[width=0.8\textwidth]{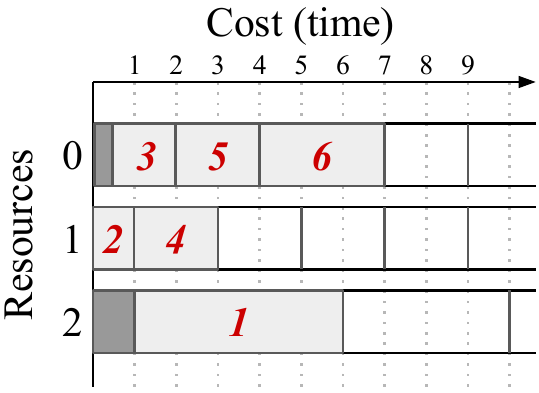}
        \caption{Gantt chart with the order the tasks are assigned with OLAR (\textit{\textcolor{red}{italics, in red}}).}
      \label{fig:0-3}
    \end{subfigure}
      \caption{Example of task assignment respecting lower and upper limits with OLAR.}
    \label{fig:0}
\end{figure*}

Our algorithm, named OLAR for \textit{OptimaL Assignment of tasks to Resources}, is 
similar to the solution of 
to the problem of scheduling identical and independent tasks (1-D data) on heterogeneous resources (\cite{casanova2008parallel}, Algorithm 6.2), but it includes different costs functions, and limits.
OLAR employs dynamic programming to compute an optimal final assignment by iteratively finding optimal assignments for an increasing number of tasks.
Its main idea is based on the notion of assigning the next task $t+1$ to a resource $j$ that would minimize the makespan $\MakespanT{t+1}$ (Eq.~\eqref{eq:find-j}).
OLAR is further detailed in Algorithm~\ref{alg:olar}.

\begin{equation}
    j = \argmin\limits_{i \in \Resources} \Cost{i}{\MappingT{i}{t}+1}
    \label{eq:find-j}
\end{equation}

\begin{algorithm}[!hb]
    \DontPrintSemicolon
    \KwData{Tasks $\Tasks$, Resources $\Resources$, Cost functions $\Cost{i}{\cdot}$,
    Lower and Upper limits $\Lower{i}$ and $\Upper{i}$ ($i \in \Resources$)}
    \KwResult{Assignment of tasks to resources $\Mapping{i}$ ($i \in \Resources$)}
    $h \leftarrow min$-$heap()$ \Comment*[r]{Heap sorted by cost}
    \For{$i \in \Resources$}{
        $\Mapping{i} \leftarrow \Lower{i}$ \Comment*[r]{Resources start at their lower limit}
        \Comment{Checks if the resource can receive more tasks}
        \If{$\Mapping{i} < \Upper{i}$}{
            \Comment{Inserts the cost of the next task on $i$}
            $h.push(\Cost{i}{\Mapping{i}+1}, i)$ 
        }
    }
    \For{$t$ from $\LowV+1$ to $\Tasks$}{
        \Comment{Extracts the next optimal assignment (Eq.~\eqref{eq:find-j})}
        $(c, j)  \leftarrow h.pop()$ \;
        $\Mapping{j} \leftarrow \Mapping{j} + 1$ \Comment*[r]{Assigns $t$ to $j$} 
        \Comment{Checks if the resource can receive more tasks}
        \If{$\Mapping{j} < \Upper{j}$}{
            \Comment{Inserts the cost of the next task on $j$}
            $h.push(\Cost{j}{\Mapping{j}+1}, j)$ 
        }
    }
\caption{OLAR}
\label{alg:olar}
\end{algorithm}

Algorithm~\ref{alg:olar} starts by setting up a minimum heap to store the costs of adding a task to each resource (line~1).
It then initializes the assignment of tasks to resources with their lower limits (line~3), and adds the cost of assigning their next task to the heap if the resource can receive more tasks (line~5).
Its main loop (lines~8--14) makes the optimal assignment of one task at a time
by getting one of the resources with the minimum cost to receive a new task (line~9),
assigning the additional task to it (line~10), and updating its cost on the heap (line~12) if it can still receive more tasks.
After all iterations of the main loop, all tasks will have been assigned to a resource and the algorithm finishes.

Fig.~\ref{fig:0} illustrates an example of a schedule computed by OLAR.
The parameters in Fig.~\ref{fig:0-1} indicate that each resource has its own cost function, and that resource~$2$ must receive at least one task, while resource~$1$ cannot receive more than two.
The Gantt chart in Fig.~\ref{fig:0-2} shows that resource~$2$ could be a straggler in this scenario, as its costs are much larger than the costs of resources~$0$ and~$1$. For instance, if we were to assign two tasks to each resource, the makespan would be dictated by~$\Cost{2}{2}=10$.
In contrast, OLAR would start by first assigning one task (\textit{\textcolor{red}{1}} in Fig.~\ref{fig:0-3}) to resource~$2$ to respect its lower limit (lines 2--7 in Algorithm~\ref{alg:olar}).
Then OLAR would follow its second loop and assign one task (\textit{\textcolor{red}{2}}) to resource~$1$, one task (\textit{\textcolor{red}{3}}) to resource~$0$, and another task (\textit{\textcolor{red}{4}}) to resource~$1$.
At this moment, OLAR would stop considering resource~$1$ for new assignments as it has reached its upper limit (line 11).
It follows assigning the remaining tasks (\textit{\textcolor{red}{5}} and \textit{\textcolor{red}{6}}) to resource~$0$, as it has the minimum cost.
This results in an assignment with $\Makespan=7$, which is optimal considering the restrictions.

\subsubsection{Complexity}

The time complexity of OLAR can be computed as follows:
the initialization loop (lines 2--7) runs for $\ResNum$ iterations;
each of its iterations includes an insertion in the heap, which takes $\Theta(1)$ for a Binomial heap; 
this makes the complexity of the first loop to be $\Theta(\ResNum)$.
Meanwhile, the main loop (lines 8--14) runs for $\Theta(\Tasks)$ iterations;
each iteration includes a removal from the heap ($\Theta(\log \ResNum)$), while the other operations are $\Theta(1)$.
Combining these two loops, we can conclude that OLAR is $\Theta(\ResNum + \Tasks \log \ResNum)$, or $\Theta(\Tasks \log \ResNum)$ for $\Tasks \ge \ResNum$.

We can notice that the general behavior of Algorithm~\ref{alg:olar} is similar to sorting the costs of assigning up to $\Tasks$ tasks to each resource,
and then iteratively traversing the resulting array to assign the $k^\text{th}$ task to the resource with the $k^\text{th}$-smallest cost.
Nevertheless, merging $\ResNum$ sorted arrays of size $\Tasks$ would require $O(\ResNum \Tasks\log \Tasks)$ operations.
OLAR avoids this complexity by keeping a minimum heap of $\ResNum$ elements only, and by iteratively removing the minimum and inserting a new item to the heap $\Tasks$ times only. 
Moreover, given that finding an optimal solution requires the minimum of $\ResNum$ values $\Tasks$ times, we present Conjecture~\ref{conj:1}.
This conjecture could be disproved by finding an optimal algorithm that does not require minimum values or sorting\footnote{We require comparison-based sorting algorithms because costs are non-negative real numbers.}.
\begin{conjecture}
    The lower bound of any optimal sequential algorithm for this scheduling problem is $\Omega(\ResNum\log \ResNum)$ for $\ResNum = \Tasks$.
    \label{conj:1}
\end{conjecture}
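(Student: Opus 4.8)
\emph{Toward a proof.} The plan is to derive Conjecture~\ref{conj:1} from the classical $\Omega(\ResNum\log\ResNum)$ worst-case lower bound for comparison-based sorting, which is the natural target since (as noted in the footnote) the costs are arbitrary non-negative reals accessed only through order comparisons. Given an instance of sorting --- $\ResNum$ pairwise-distinct reals $a_1,\dots,a_{\ResNum}$ in unknown order --- I would construct a scheduling instance with exactly $\ResNum$ resources and $\Tasks=\ResNum$ tasks whose cost functions $\Cost{i}{\cdot}$ are built from the $a_i$ (plus a fixed family of offsets), designed so that from \emph{any} optimal assignment $\Mapping{1},\dots,\Mapping{\ResNum}$, together with the limits $\Lower{i},\Upper{i}$, one can read off in $O(\ResNum)$ additional time the permutation that sorts the $a_i$. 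Such a construction turns any correct scheduling algorithm into a comparison-based sorting algorithm, so the sorting lower bound transfers verbatim and yields the claim.

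The crux is the gadget: I would look for cost functions forcing the optimal assignment to be unique and to encode the sorted order of the $a_i$, by carefully arranging the limits and the marginal costs $\Cost{i}{k+1}-\Cost{i}{k}$. An alternative that avoids building an explicit sorter is an algebraic-decision-tree argument in the style of Ben-Or: show that, in the space of the input constants, the preimages of the finitely many possible output assignments cut that space into $\Omega(\ResNum!/2^{O(\ResNum)})$ connected components, which already forces decision-tree depth $\Omega(\ResNum\log\ResNum)$ because $\log(\ResNum!/2^{O(\ResNum)})=\Theta(\ResNum\log\ResNum)$. Either way, the conjecture reduces to exhibiting one suitable family of cost functions.

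The hard part --- and the reason this is stated as a conjecture rather than a theorem --- is that such a family may simply not exist when $\Tasks=\ResNum$. A scenario can admit many optimal assignments and a correct algorithm may return any one of them; worse, the regime $\Tasks=\ResNum$ leaves a great deal of feasibility slack (with all lower limits $0$ one may even pile every task onto a single resource), so it is unclear that the optimum can be pinned to a unique, order-revealing assignment --- indeed, $\ResNum$ distinct non-negative integers sum to at least $\binom{\ResNum}{2}$, so the $\Mapping{i}$ cannot all be distinct once $\Tasks=\ResNum$, which already limits how much order information a single assignment can carry. If uniqueness cannot be enforced, the input-to-output map collapses many sorted orders onto the same assignment and the reduction fails. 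I would therefore be content, as intermediate results, with: (i) the bound for $\Tasks$ somewhat larger than $\ResNum$, where extra tasks make it easier to force informative structure in the optimum; and (ii) the bound for the natural algorithmic template that OLAR instantiates --- maintain the $\ResNum$ next-task costs, repeatedly extract a minimum (cf.\ Eq.~\eqref{eq:find-j}), and insert its successor --- since that template is as powerful as a comparison-based priority queue and can therefore sort $\ResNum$ elements. Closing the gap between (ii) and the unrestricted model, i.e.\ ruling out an algorithm that exploits algebraic relations among the cost functions to avoid ever computing a minimum, is the obstacle that keeps Conjecture~\ref{conj:1} open.
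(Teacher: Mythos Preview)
The paper does not prove this statement: it is labeled a \emph{conjecture} and is left open, with only the one-line remark that it ``could be disproved by finding an optimal algorithm that does not require minimum values or sorting.'' There is thus no paper proof to compare against.

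Your proposal is consistent with this status: you do not claim a proof but sketch two natural attack lines (reduction from comparison-based sorting; a Ben-Or connected-components argument) and then correctly identify the obstruction --- namely that with $\Tasks=\ResNum$ the optimal assignment need not be unique or order-revealing, so a sorting reduction may fail, and that ruling out algorithms exploiting algebraic structure (rather than mere comparisons) is genuinely hard. Your discussion is in fact more substantive than anything the paper offers on this point; the intermediate results you suggest (proving the bound for $\Tasks$ moderately larger than $\ResNum$, or for the priority-queue template that OLAR instantiates) are reasonable partial targets. Just be clear that what you have written is a research plan with identified gaps, not a proof --- which is exactly appropriate, since the paper itself has none.
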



The space complexity of OLAR is dominated by the presence of the $\ResNum$ cost functions of size $\Tasks+1$, as all other data structures are $\Theta(\ResNum)$.
This results in a space complexity of $\Theta(\ResNum \Tasks)$.
Nevertheless, if we know the functions that generate the costs for each resource, we could compute $\Cost{i}{k}$ during execution.
If each cost could be computed in $\Theta(1)$, then the space complexity of OLAR would be reduced to $\Theta(\ResNum)$ with no change to its asymptotic time complexity.

\subsection{Proof of optimality}
\label{subsec:proof}

We prove that OLAR is optimal by first proving that the base cases are optimal in Lemma~\ref{lemma:0} and Corollary~\ref{lemma:zetta}, and then by proving that each step of the algorithm is optimal in Lemma~\ref{lemma:t}.
These proofs are combined in a proof by induction in Theorem~\ref{theorem}.

\begin{lemma}\label{lemma:0}
    $\MakespanT{0}$ is optimal.
\end{lemma}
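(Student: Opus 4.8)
The plan is to prove the statement directly, since at step~$0$ the feasibility constraints leave no room for optimization. First I would invoke Eq.~\eqref{eq:assignment-t}: any assignment at step~$0$ satisfies $\sum_{i \in \Resources} \MappingT{i}{0} = 0$. Because every $\MappingT{i}{0}$ lies in $\Natural$ and is therefore non-negative, this equality forces $\MappingT{i}{0} = 0$ for every resource $i \in \Resources$. Hence the set of admissible assignments of zero tasks is the singleton containing only the all-zero vector, and it trivially respects every upper limit ($0 \le \Upper{i}$).

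Next I would compute the makespan of this unique assignment. By Eq.~\eqref{eq:makespan-t}, $\MakespanT{0} = \max_{i \in \Resources} \Cost{i}{0}$, and since this is the makespan of the only admissible zero-task assignment, it is trivially the minimum makespan over all such assignments; that is, $\MakespanT{0}$ is optimal. This also serves as the base case for the induction on the number of assigned tasks that is completed in Theorem~\ref{theorem}.

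The only point deserving a word of care is the role of the lower limits. When $\LowV = 0$ the all-zero assignment is feasible for the full problem of Eqs.~\eqref{eq:makespan}--\eqref{eq:all-respect} restricted to $0$ tasks, so ``optimal'' is meant in the strongest sense; when $\LowV > 0$, Corollary~\ref{lemma:zetta} then handles the full base state at step $\LowV$ (the unique assignment $\MappingT{i}{\LowV} = \Lower{i}$), and the argument there is essentially the same uniqueness observation applied to the shifted cost functions $k \mapsto \Cost{i}{\Lower{i}+k}$. I do not anticipate any real obstacle in Lemma~\ref{lemma:0} itself: it is a degenerate base case whose entire content is the uniqueness of the zero-task assignment, with all substantive work postponed to Lemma~\ref{lemma:t} and Theorem~\ref{theorem}.
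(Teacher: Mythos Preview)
Your proposal is correct and follows essentially the same approach as the paper: both argue that the constraint $\sum_{i}\MappingT{i}{0}=0$ together with $\MappingT{i}{0}\in\Natural$ forces the unique all-zero assignment, which is therefore trivially optimal. The paper's own proof is a one-line ``trivial'' remark; your version simply spells out the uniqueness argument and the role of the lower limits more explicitly, which is fine but not required.
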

\begin{proof}
    The proof is trivial. As there is no assignment decision done before or at this step, the only (therefore, optimal) solution is assigning zero tasks to all resources.
\end{proof}
\begin{corollary}\label{lemma:zetta}
    $\MakespanT{\LowV}$ is optimal.
\end{corollary}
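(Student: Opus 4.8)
The plan is to show that at step $t = \LowV$, the assignment produced by OLAR is forced: there is essentially only one feasible assignment at this step, so it must be optimal. Recall that by the feasibility condition~\eqref{eq:respect-1} we have $\LowV \le \Tasks$, so step $\LowV$ is a valid step of the algorithm, and that after the initialization loop (lines~2--7) OLAR sets $\Mapping{i} = \Lower{i}$ for every $i \in \Resources$. Thus the assignment OLAR holds at step $\LowV$ is exactly $\MappingT{i}{\LowV} = \Lower{i}$ for all $i$.

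First I would verify this assignment is feasible at step $\LowV$: it satisfies the total-assignment constraint~\eqref{eq:assignment-t}, since $\sum_{i \in \Resources} \Lower{i} = \LowV$ by the definition in~\eqref{eq:respect-1}, and it satisfies the limit constraints, since $\Lower{i} \le \Lower{i} \le \Upper{i}$ by the standing assumption $\Lower{i} \le \Upper{i}$. Next, I would argue uniqueness: any feasible assignment $\MappingT{i}{\LowV}$ at step $\LowV$ must satisfy $\MappingT{i}{\LowV} \ge \Lower{i}$ for every $i$ (lower-limit constraint) while $\sum_{i} \MappingT{i}{\LowV} = \LowV = \sum_{i} \Lower{i}$. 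If any resource received strictly more than its lower limit, the total would exceed $\LowV$, a contradiction; hence $\MappingT{i}{\LowV} = \Lower{i}$ for all $i$. Since the feasible assignment at step $\LowV$ is unique, it is trivially optimal, and it coincides with the one OLAR produces. This mirrors the reasoning of Lemma~\ref{lemma:0}, which is just the special case $\LowV = 0$.

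There is essentially no obstacle here — the statement is a direct consequence of the feasibility assumptions and the initialization step of Algorithm~\ref{alg:olar}. The only point requiring a moment's care is making explicit that $\LowV \le \Tasks$ guarantees step $\LowV$ is reached before the main loop terminates (so the claim is not vacuous), and that "optimal" at step $\LowV$ is unambiguous because the feasible set at that step is a singleton. One could also phrase it as: the constraints~\eqref{eq:assignment-t} and~\eqref{eq:all-respect} at $t = \LowV$ admit exactly one solution, so minimizing $\MakespanT{\LowV}$ over that set returns that solution, which is what OLAR computes.
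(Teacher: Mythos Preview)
Your proposal is correct and follows essentially the same argument as the paper: both observe that the lower-limit constraints together with $\sum_i \MappingT{i}{\LowV} = \LowV$ force the unique feasible assignment $\MappingT{i}{\LowV} = \Lower{i}$, which is therefore optimal. Your version simply spells out the feasibility and uniqueness steps more explicitly than the paper's one-sentence proof.
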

\begin{proof}
    Similarly to Lemma~\ref{lemma:0}, there is only one assignment of $\LowV$ tasks to resources that respects the lower limits of tasks for all resources (i.e., $\MappingT{i}{\LowV} = \Lower{i}, \forall i \in \Resources$), therefore it is optimal.
\end{proof}

\begin{lemma}\label{lemma:t}
    If $\MakespanT{t}$ is optimal, $\MakespanT{t+1}$ is optimal.
\end{lemma}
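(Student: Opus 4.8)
The plan is a direct minimality argument rather than a proof by contradiction. I would fix an arbitrary feasible assignment $B = (B_i)_{i\in\Resources}$ with $\sum_{i\in\Resources} B_i = t+1$, write $M_B \coloneqq \max_{i\in\Resources}\Cost{i}{B_i}$ for its makespan, and prove $\MakespanT{t+1} \le M_B$; since the assignment OLAR holds after step $t+1$ is itself feasible, this shows $\MakespanT{t+1}$ attains the minimum makespan over all feasible assignments of $t+1$ tasks, which is the definition of optimal. The first step is to record the single move OLAR makes at this step: it picks $j$ as in Eq.~\eqref{eq:find-j} (using the $+\infty$ convention of Eq.~\eqref{eq:inf} for resources already at their upper limit) and sets $\MappingT{j}{t+1} = \MappingT{j}{t}+1$, changing no other resource. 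Therefore
\begin{equation}
    \MakespanT{t+1} \;\le\; \max\bigl(\MakespanT{t},\ \Cost{j}{\MappingT{j}{t}+1}\bigr),
    \label{eq:plan-split}
\end{equation}
and it is enough to bound both terms on the right by $M_B$.

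For the first term I would strip one task off $B$. Since $t+1 > \LowV = \sum_{i\in\Resources}\Lower{i}$, some resource $s$ satisfies $B_s > \Lower{s}$; let $B'$ coincide with $B$ except $B'_s = B_s - 1$. Then $B'$ is a feasible assignment of $t$ tasks, and because the cost functions are non-decreasing (Eq.~\eqref{eq:non-decreasing}) we have $\Cost{s}{B'_s} \le \Cost{s}{B_s}$, hence $M_{B'} \le M_B$. Applying the inductive hypothesis ($\MakespanT{t}$ optimal) to $B'$ yields $\MakespanT{t} \le M_{B'} \le M_B$. For the second term I would argue by counting: since $\sum_{i}B_i = t+1 > t = \sum_i \MappingT{i}{t}$, there is a resource $p$ with $B_p \ge \MappingT{p}{t}+1$, so $\MappingT{p}{t}+1 \le \Upper{p}$ (hence $p$ is a legitimate candidate in the $\argmin$) and $\Cost{p}{\MappingT{p}{t}+1} \le \Cost{p}{B_p} \le M_B$ by monotonicity; because $j$ minimizes the next-task cost, $\Cost{j}{\MappingT{j}{t}+1} \le \Cost{p}{\MappingT{p}{t}+1} \le M_B$. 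Substituting both bounds into \eqref{eq:plan-split} gives $\MakespanT{t+1} \le M_B$, finishing the step.

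Along the way I would check the two well-definedness points the argument leans on. First, $j$ exists: from $t < \Tasks \le \UpV = \sum_i \Upper{i}$ some resource sits strictly below its upper limit, and a short loop invariant on Algorithm~\ref{alg:olar} shows the heap contains exactly the entry $(\Cost{i}{\MappingT{i}{t}+1},\, i)$ for each such resource, so the pop returns a genuine minimizer. Second, a strippable resource $s$ exists because $t+1 > \LowV$. Both facts hold uniformly for every $t$ with $\LowV \le t < \Tasks$, so no extra base-case bookkeeping is needed beyond Lemma~\ref{lemma:0} and Corollary~\ref{lemma:zetta}.

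The main obstacle is conceptual: the inductive hypothesis is intentionally weak --- it only says the \emph{number} $\MakespanT{t}$ equals the optimal makespan for $t$ tasks, not that OLAR's assignment $\MappingT{\cdot}{t}$ has any exchange-friendly structure --- so the proof cannot use a classical assignment-swapping exchange argument and must be carried entirely through scalar makespan comparisons. Two ingredients make that viable, and getting them right is the real work: OLAR touches only one resource per step, which is what lets \eqref{eq:plan-split} collapse $\MakespanT{t+1}$ to $\MakespanT{t}$ plus a single new cost; and the non-decreasing cost functions, invoked in opposite directions --- once to see that deleting a task from $B$ cannot increase $M_B$, once to lower-bound $M_B$ by the next-task cost of the over-loaded resource $p$. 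The limits enter only through the counting inequalities $t+1>\LowV$ and $t<\UpV$, with Eq.~\eqref{eq:inf} ensuring the $\argmin$ defining $j$ never selects a saturated resource.
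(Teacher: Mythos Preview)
Your proof is correct and takes a genuinely different route from the paper. The paper argues by contradiction: it supposes some resource $r \neq j$ yields a strictly smaller makespan when the $(t{+}1)$-st task is placed there, rewrites both makespans as $\max(\MakespanT{t},\cdot)$, and derives a contradiction by case-splitting on whether $\MakespanT{t}$ dominates $\Cost{j}{\MappingT{j}{t}+1}$. That argument only ever pits OLAR's choice against \emph{other single-task extensions of the same assignment $\MappingT{\cdot}{t}$}; it never confronts an arbitrary feasible $(t{+}1)$-assignment. Your direct argument does: stripping one task from $B$ lets the inductive hypothesis bound $\MakespanT{t}$ by $M_B$, and the pigeonhole resource $p$ with $B_p \ge \MappingT{p}{t}+1$ bounds the new-cost term $\Cost{j}{\MappingT{j}{t}+1}$ by $M_B$ via the $\argmin$ defining $j$. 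So beyond the stylistic difference (direct minimality versus contradiction), your version is strictly more complete --- it establishes optimality against all competitors, not only those reachable from $\MappingT{\cdot}{t}$ in one move --- at the price of the two small auxiliary constructions $B'$ and $p$ that the paper's narrower comparison avoids.
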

\begin{proof}
\textit{By contradiction.} 
    Assume there is a resource $r \in \Resources\setminus\{j\}$ ($j$ is defined in Eq.~\eqref{eq:find-j}) such that mapping task $t+1$ to $r$ leads to a smaller makespan than mapping it to $j$. This would lead to the following inequality for their makespans:
\begin{equation}
\begin{aligned}
    \max \left( \max\limits_{i \in \Resources \setminus \{r\}} \Cost{i}{\MappingT{i}{t}}, \Cost{r}{\MappingT{r}{t}+1} \right) &< \\ 
    \max \left( \max\limits_{i \in \Resources \setminus \{j\}} \Cost{i}{\MappingT{i}{t}}, \Cost{j}{\MappingT{j}{t}+1} \right) &
    \label{eq:big-diff}
\end{aligned}
\end{equation}

    Using the non-decreasing property of $\Cost{i}{\cdot}$ in Eq.~\eqref{eq:non-decreasing}, the associative property of the maximum operator, and the definition of $\MakespanT{t}$ in Eq.~\eqref{eq:makespan-t}, we can rewrite Eq.~\eqref{eq:big-diff} as

\begin{equation}
\begin{aligned}
    \max \left( \max\limits_{i \in \Resources \setminus \{r\}} \Cost{i}{\MappingT{i}{t}}, \max (\Cost{r}{\MappingT{r}{t}}, \Cost{r}{\MappingT{r}{t}+1}) \right) &< \\ 
    \max \left( \max\limits_{i \in \Resources \setminus \{j\}} \Cost{i}{\MappingT{i}{t}}, \max (\Cost{j}{\MappingT{j}{t}},\Cost{j}{\MappingT{j}{t}+1}) \right) & 
\end{aligned}
\end{equation}

\begin{equation}
\begin{aligned}
    \max \left( \max\limits_{i \in \Resources} \Cost{i}{\MappingT{i}{t}}, \Cost{r}{\MappingT{r}{t}+1} \right) &< \\ 
    \max \left( \max\limits_{i \in \Resources} \Cost{i}{\MappingT{i}{t}}, \Cost{j}{\MappingT{j}{t}+1} \right) & 
\end{aligned}
\end{equation}

\begin{equation}
    \max \left( \MakespanT{t}, \Cost{r}{\MappingT{r}{t}+1} \right) < 
    \max \left( \MakespanT{t}, \Cost{j}{\MappingT{j}{t}+1} \right) 
    \label{eq:the-diff}
\end{equation}

    We can split the analysis of Eq.~\eqref{eq:the-diff} in two parts:

    \textit{Part 1.}~$\MakespanT{t} \ge \Cost{j}{\MappingT{j}{t}+1}.$ This would mean
\begin{equation*}
    \MakespanT{t} \le \Cost{r}{\MappingT{r}{t}+1} < \MakespanT{t}
\end{equation*}
or
\begin{equation*}
    \Cost{r}{\MappingT{r}{t}+1} \le \MakespanT{t} < \MakespanT{t}
\end{equation*}

Both require $\MakespanT{t} < \MakespanT{t}$, which is a contradiction.

    \textit{Part 2.}~$\MakespanT{t} < \Cost{j}{\MappingT{j}{t}+1}.$ Using the definition of $j$ in Eq.~\eqref{eq:find-j}, this would mean
\begin{equation*}
    \MakespanT{t} \le \Cost{r}{\MappingT{r}{t}+1} < \min\limits_{i \in \Resources} \Cost{i}{\MappingT{i}{t}+1}
\end{equation*}
or
\begin{equation*}
    \Cost{r}{\MappingT{r}{t}+1} \le \MakespanT{t} < \min\limits_{i \in \Resources} \Cost{i}{\MappingT{i}{t}+1}
\end{equation*}

Both require $\Cost{r}{\MappingT{r}{t}+1} < \min\limits_{i \in \Resources} \Cost{i}{\MappingT{i}{t}+1}$ and, as $r \in \Resources$, this is a contradiction.

As all parts are contradictions, this means that Eq.~\eqref{eq:big-diff} is false, so $\MakespanT{t+1}$ is optimal.
\end{proof}

\begin{theorem}\label{theorem}
The makespan $\Makespan$ computed by OLAR is optimal.
\end{theorem}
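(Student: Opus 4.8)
The plan is to assemble the base cases and the inductive step already in hand into a single induction on the number of assigned tasks. Before that, I would record that OLAR terminates with a feasible assignment: the initialization loop sets $\Mapping{i} = \Lower{i}$ for every $i$, so exactly $\LowV$ tasks are placed; the main loop then performs $\Tasks - \LowV$ additional assignments, and since $\Tasks \le \UpV = \sum_{i \in \Resources} \Upper{i}$ by Eq.~\eqref{eq:respect-1}, at every iteration there is still some resource with $\Mapping{i} < \Upper{i}$, so the heap is never empty and each remaining task is placed on a resource without violating $\Upper{i}$. Hence the output satisfies Eqs.~\eqref{eq:allmap} and~\eqref{eq:all-respect}, and it coincides with the partial assignment $\MappingT{i}{\Tasks}$ obtained by the step-by-step process $\LowV, \LowV+1, \dots, \Tasks$ used throughout Section~\ref{subsec:algo}.

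Next I would run the induction on $t$ over the range $\LowV \le t \le \Tasks$, with induction hypothesis ``$\MakespanT{t}$, the makespan of the partial assignment computed by OLAR after assigning $t$ tasks, is optimal among all assignments of $t$ tasks respecting the lower and upper limits.'' The base case is exactly Corollary~\ref{lemma:zetta} (with Lemma~\ref{lemma:0} covering $t=0$): the lower limits force a unique feasible partial assignment, so it is trivially optimal. The inductive step is precisely Lemma~\ref{lemma:t}: if $\MakespanT{t}$ is optimal, then $\MakespanT{t+1}$ is optimal. Chaining the step from $t = \LowV$ up to $t = \Tasks - 1$ gives that $\MakespanT{\Tasks}$ is optimal, and since $\MakespanT{\Tasks} = \Makespan$ (as noted right after Eq.~\eqref{eq:assignment-t}), the makespan returned by OLAR is optimal. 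I would then close with a one-line remark that resources already at their upper limit are harmless in the argument, because Eq.~\eqref{eq:inf} assigns cost $+\infty$ to any further task on them, so such a resource is never the minimizer $j$ of Eq.~\eqref{eq:find-j} nor a strictly better alternative $r$ in Lemma~\ref{lemma:t}.

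The step I expect to require the most care is the interface between Lemma~\ref{lemma:t} and the induction hypothesis. Lemma~\ref{lemma:t} compares OLAR's greedy extension (adding task $t+1$ to the minimizer $j$) against moving that same task onto OLAR's own step-$t$ layout at any other resource $r$, and shows the greedy choice is no worse; to obtain optimality over \emph{all} feasible $(t+1)$-task assignments, I would argue that any feasible assignment of $t+1$ tasks is a feasible assignment of $t$ tasks (of makespan $\ge \MakespanT{t}$ by the hypothesis) together with one extra task on some resource, so transplanting that extra task onto OLAR's optimal step-$t$ layout and invoking Lemma~\ref{lemma:t} yields a makespan no larger than OLAR's step-$(t+1)$ value $\MakespanT{t+1}$. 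Making this reduction fully rigorous — in particular that comparing against OLAR's specific optimal $t$-task layout loses no generality — is the one point I would write out carefully rather than leave implicit.
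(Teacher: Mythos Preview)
Your approach is exactly the paper's: its proof is a one-line induction citing Lemma~\ref{lemma:0} and Corollary~\ref{lemma:zetta} for the base case and Lemma~\ref{lemma:t} for the inductive step, with no separate feasibility check and no discussion of the interface issue you flag. Your instinct about that interface is well placed --- Lemma~\ref{lemma:t} as proved only compares single-task extensions of OLAR's own step-$t$ layout, and the paper leaves the reduction from arbitrary $(t{+}1)$-task assignments implicit --- but be aware that your ``transplant'' sketch does not yet close it, since knowing that the truncated competitor $B'$ satisfies $\max_i \Cost{i}{B'_i} \ge \MakespanT{t}$ does not by itself give $\max_i \Cost{i}{B_i} \ge \MakespanT{t+1}$; a genuine exchange argument (iteratively swapping tasks between $B$ and $A^{t}$ without increasing the makespan until $B$ becomes a one-task extension of $A^{t}$) is what you would actually need to write out.
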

\begin{proof}
    \textit{By induction.} Lemma~\ref{lemma:0} and Corollary~\ref{lemma:zetta} prove the optimality for the base case, while Lemma~\ref{lemma:t} proves it for the inductive step, so $\Makespan$ is optimal.
\end{proof}

\section{Experimental evaluation}
\label{sec:exp}

With the interest of comparing how OLAR and other scheduling algorithms from the state of the art perform for Federated Learning,
we have organized an experimental evaluation based on simulation.
All the data and code necessary to run the experiments and their analysis was implemented using Python~3 and made available online for the community~\cite{gitrepo}.
We chose Python due to its ease of implementation (with libraries such as \texttt{numpy}) and execution (which facilitates the reproduction of our experiments).
Whenever possible, we used \texttt{numpy} array operations to optimize the schedulers.
OLAR also uses \texttt{heapq} for its minimum heap.
As this module implements binary heaps, we adapted OLAR to use a \textit{heapify} operation instead of inserting costs iteratively (line~5 in Algorithm~\ref{alg:olar}) to keep its first loop within $O(n)$ operations.

Our experiments include four other scheduling algorithms, four kinds of cost functions for the resources, and a combination of scheduling scenarios with varying numbers of resources, tasks, and others.
We compare the algorithms based on the quality of their schedules (achieved makespans) and their execution times for what represents scheduling a single round of training.
We evaluate these metrics for scenarios without and with the inclusion of lower and upper limits of tasks per resource.
We explain each of these points in the next sections.
Additional information related to the execution platform and special parameters are detailed in Appendix~\ref{appendix:rep}.

\subsection{Scheduling Algorithms}

Besides OLAR, our experiments include four limit-unaware algorithms:

\begin{enumerate}
    \item \textbf{FedAvg}~\cite{mcmahan2017communication} distributes the tasks equally among the resources in a cost-oblivious manner.
    It serves as our baseline for comparisons.

    \item \textbf{Fed-LBAP}~\cite{wang2020optimize} was created to schedule i.i.d. data following ideas related to the \textit{linear bottleneck assignment
problem} (hence, LBAP).
    The algorithm sorts the costs for all resources and numbers of tasks, and then performs a binary search to find the minimum cost that includes the assignment of all tasks.
    Each step of the binary search verifies if enough tasks could be assigned respecting the current maximum cost.
 
    \item \textbf{Proportional($k$)} considers the cost of mapping $k$ tasks to each resource. 
    These costs are used to compute an inverse proportion of the number of tasks to be received (the higher the cost, the less tasks a resource receives).
    If any tasks are missing due to rounding errors, they are assigned
    one by one to the resources in order.

    \item \textbf{Random(seed)} generates a random number uniformly in the interval $[1,10)$
    for each resource based on a random number generator seed. 
    It uses the sum of all random numbers to find the proportion of
    tasks each resource will receive.
    If any tasks are missing due to rounding errors, it randomly adds
    them to the resources.
\end{enumerate}

Given the need to set the values of $\Mapping{i}$ ($i \in \Resources$), FedAvg, Proportional, and Random are $O(\ResNum)$ sequential algorithms.
Meanwhile, Fed-LBAP does $O(\ResNum\Tasks\log \ResNum\Tasks)$ operations~\cite{wang2020optimize}.

\subsection{Kinds of Resources (Cost Functions)}

We simulate resources (devices) whose costs follow four possible functions.
All include one or more parameters ($\alpha$, $\beta$, $\gamma$) that are randomly chosen from a 
uniform distribution in the interval $[1,10)$.

\begin{enumerate}
    \item \textit{Recursive} follows a function $f(x) = f(x-1) + \alpha_x$, where
    $\alpha_x$ is randomly chosen for each value of $x$.
    On average, the cost of assigning an additional task to this kind of resource is~$5.5$.
    This makes it the most constant of the tested cost functions.

    \item \textit{Linear} follows a function $f(x) = \alpha + \beta x$.

    \item \textit{Nlogn} follows a function $f(x) = \alpha + \beta x \log x$.

    \item \textit{Quadratic} follows a function $f(x) = \alpha + \beta x + \gamma x^2$.

\end{enumerate}

\subsection{Summary of the Experimental Scenarios}

We ran experiments to measure the achieved makespans and algorithm execution times with and without limits of tasks per resources.
Each experiment follows a slightly different organization: 

\begin{itemize}
\item \textit{Scenario 1} focuses on the makespan achieved when no limits are considered; it serves to compare OLAR to the other scheduling algorithms in their original environment.
\item \textit{Scenario 2} measures the execution time of the algorithms without the use of limits; it is intended to compare the times of the best implementation of each algorithm.
\item \textit{Scenario 3} captures the makespans when lower and upper limits of tasks per resources are present; it shows how the algorithms behave after being extended to handle limits.
\item \textit{Scenario 4} profiles the execution times of the extended algorithms; it emphasizes how lower and upper limits can affect the algorithm's performance.
\end{itemize}

We present the details of each scenario and their respective results in the next sections.

\subsection{Scenario 1: (achieved makespan, no limits)} 
\label{subsec:s1}

In this scenario, the algorithms had to schedule from $1,000$ to $10,000$ tasks (in increments of $100$) over $10$ and $100$ resources. 
The heterogeneous resources are organized in five groups: all recursive costs, all linear costs, etc, and one mixed group composed of the same proportion from the four cost functions\footnote{When $\ResNum=10$, we have $3$ Recursive, $3$ Linear, $2$ Nlogn, and $2$ Quadratic resources.}.
These experiments include results for Random with three different starting random seeds, and results for Proportional with $k = \{1, \floor*{\frac{\Tasks}{\ResNum}}, \Tasks\}$.

Fig.~\ref{fig:s1} summarizes the makespan results for Scenario~1.
Each figure presents the results for one group and number of resources.
The horizontal axis represents the number of tasks to assign, and the vertical axis represents
the makespan achieved by the assignment.
Each scheduler is represented by a line connecting the makespan achieved for consecutive numbers of tasks.
Each figure has its own scale for the vertical axis due to the particular costs of its group of resources.

\begin{figure}[!htb]
    \centering 
    \begin{subfigure}{0.48\columnwidth}
      \includegraphics[width=\columnwidth]{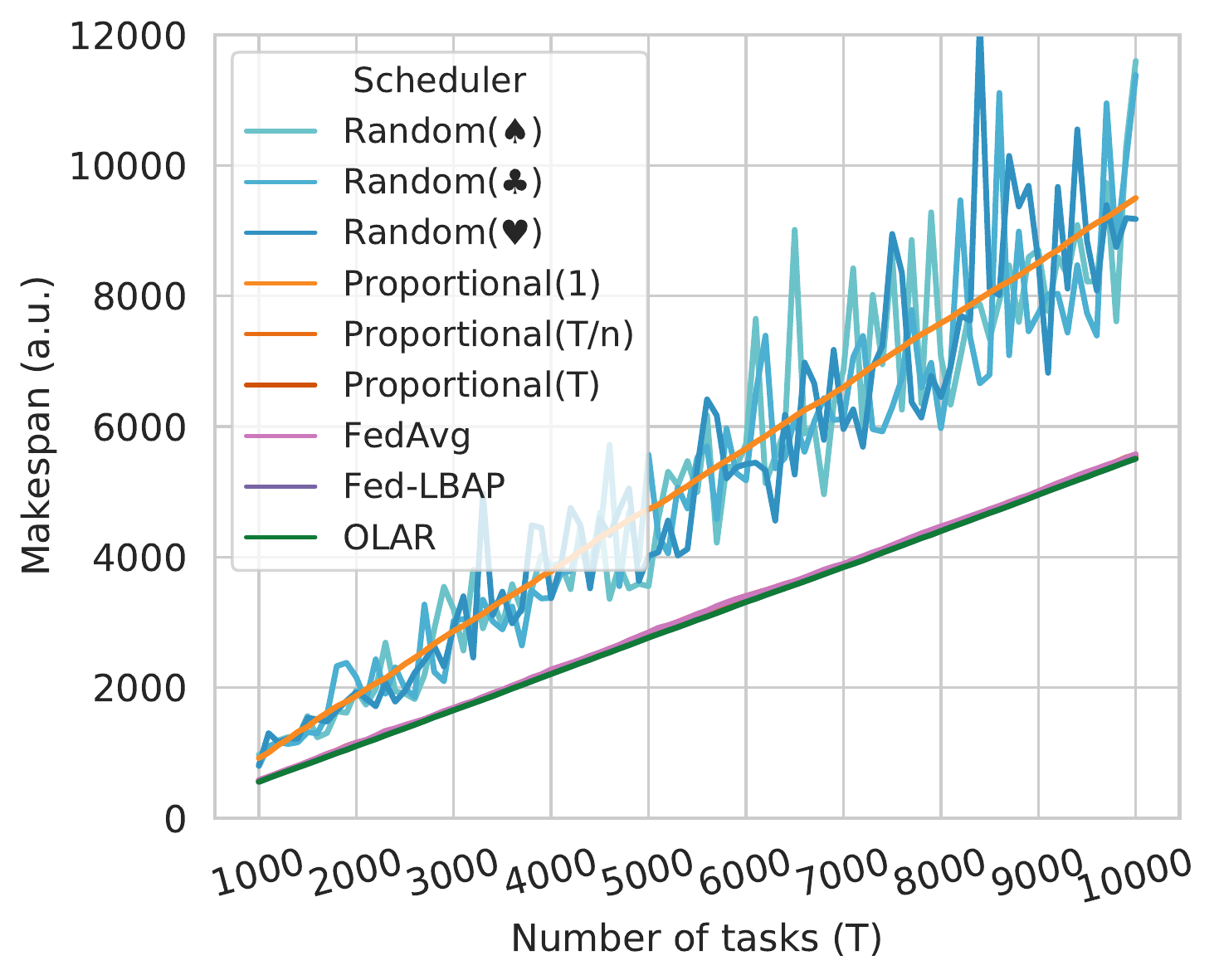}
      \caption{Recursive costs, $\ResNum=10$.}
      \label{fig:s1-recursive-10}
    \end{subfigure}\hfill 
    \begin{subfigure}{0.48\columnwidth}
      \includegraphics[width=\columnwidth]{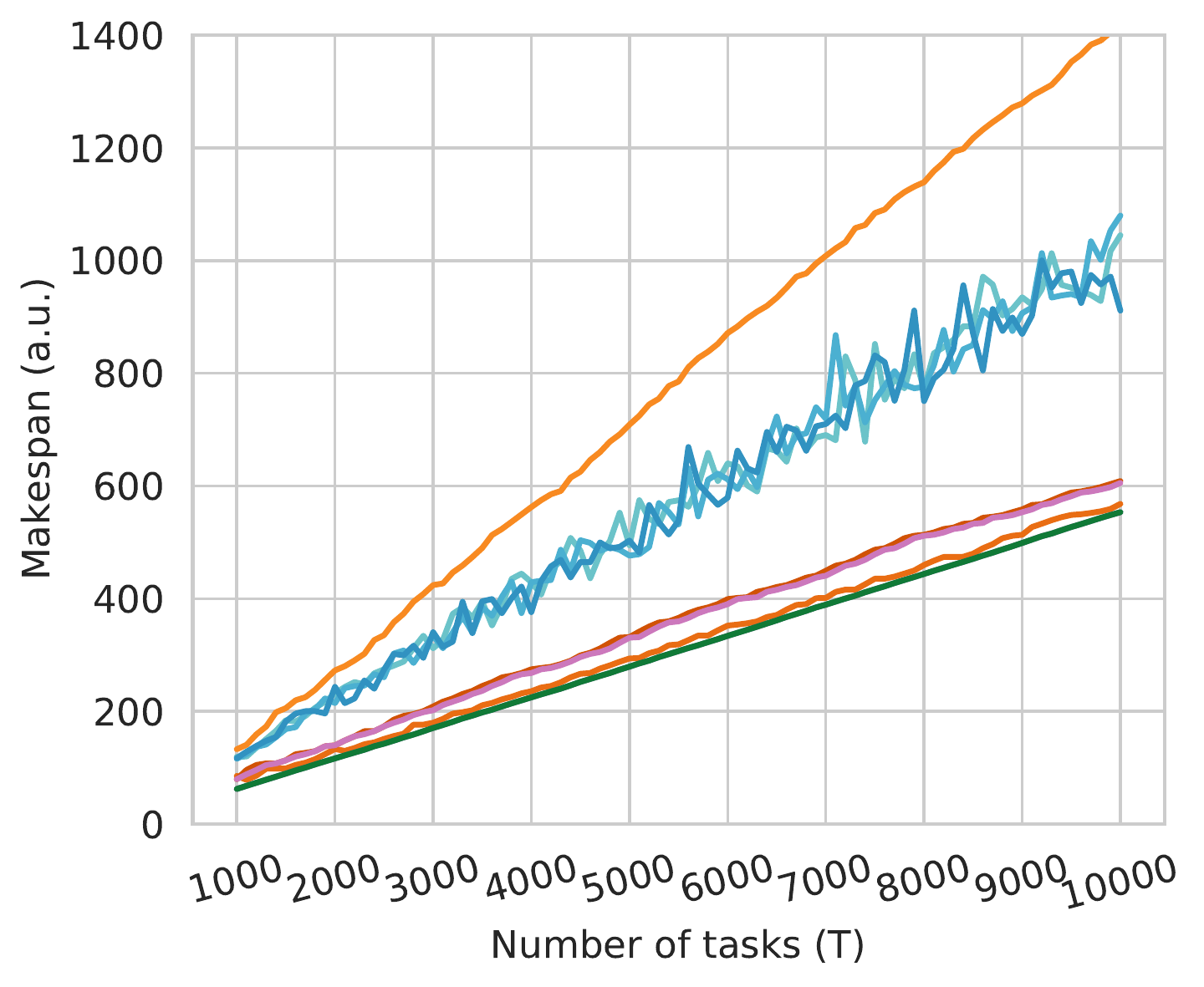}
      \caption{Recursive costs, $\ResNum=100$.}
      \label{fig:s1-recursive-100}
    \end{subfigure}
    \medskip
    \begin{subfigure}{0.48\columnwidth}
      \includegraphics[width=\columnwidth]{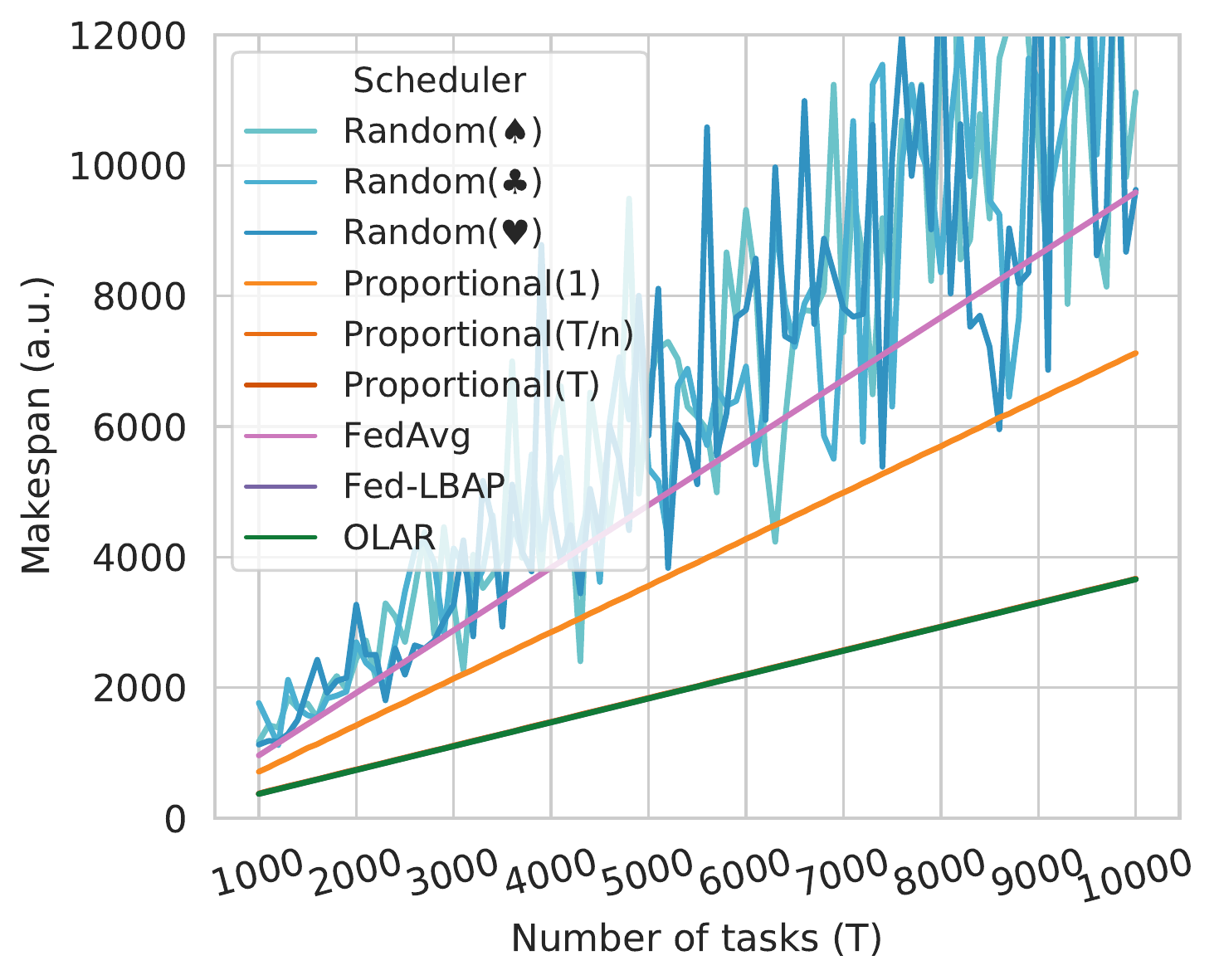}
      \caption{Linear costs, $\ResNum=10$.}
      \label{fig:s1-linear-10}
    \end{subfigure}\hfill 
    \begin{subfigure}{0.48\columnwidth}
      \includegraphics[width=\columnwidth]{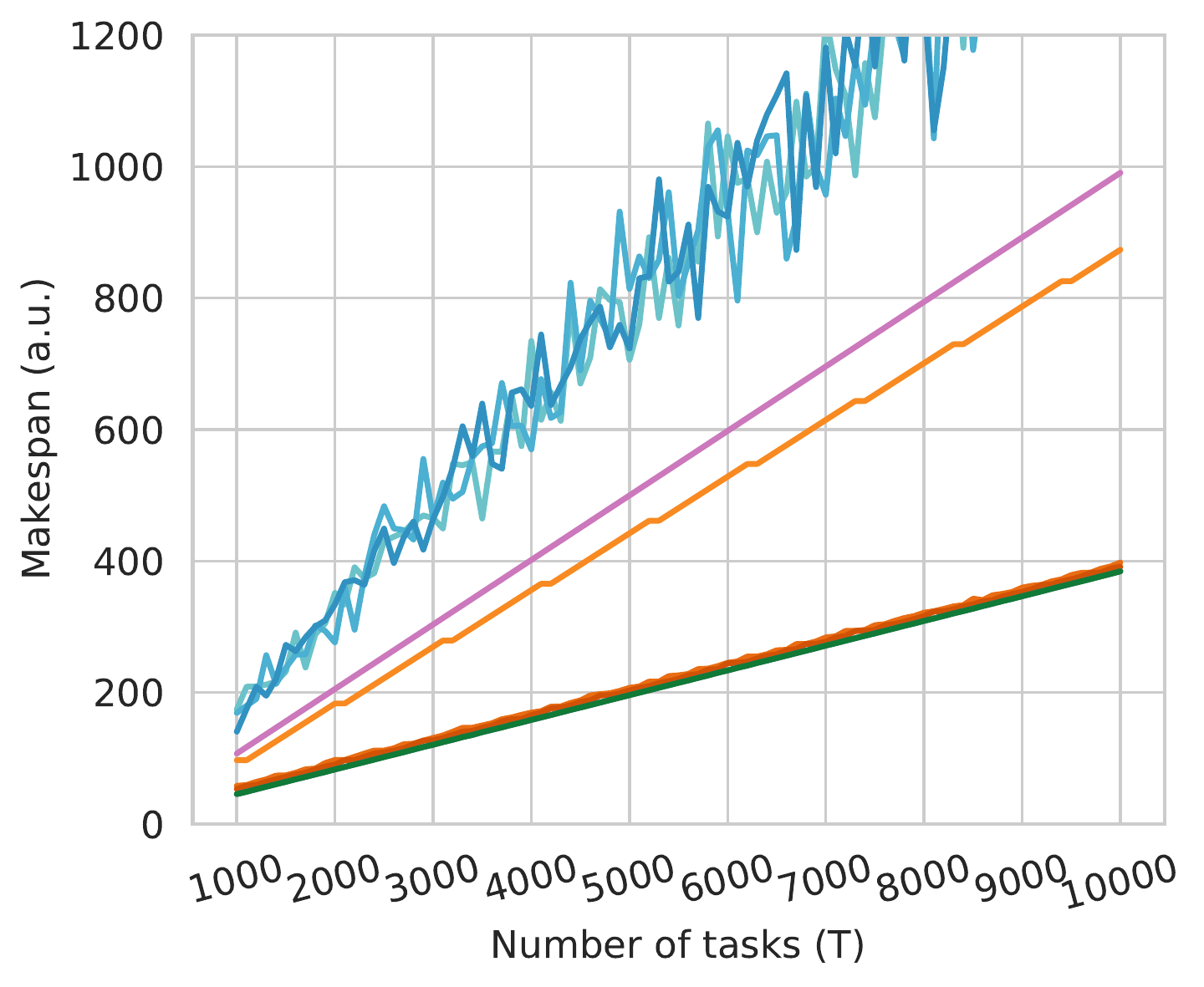}
      \caption{Linear costs, $\ResNum=100$.}
      \label{fig:s1-linear-100}
    \end{subfigure}
    \medskip
    \begin{subfigure}{0.48\columnwidth}
      \includegraphics[width=\columnwidth]{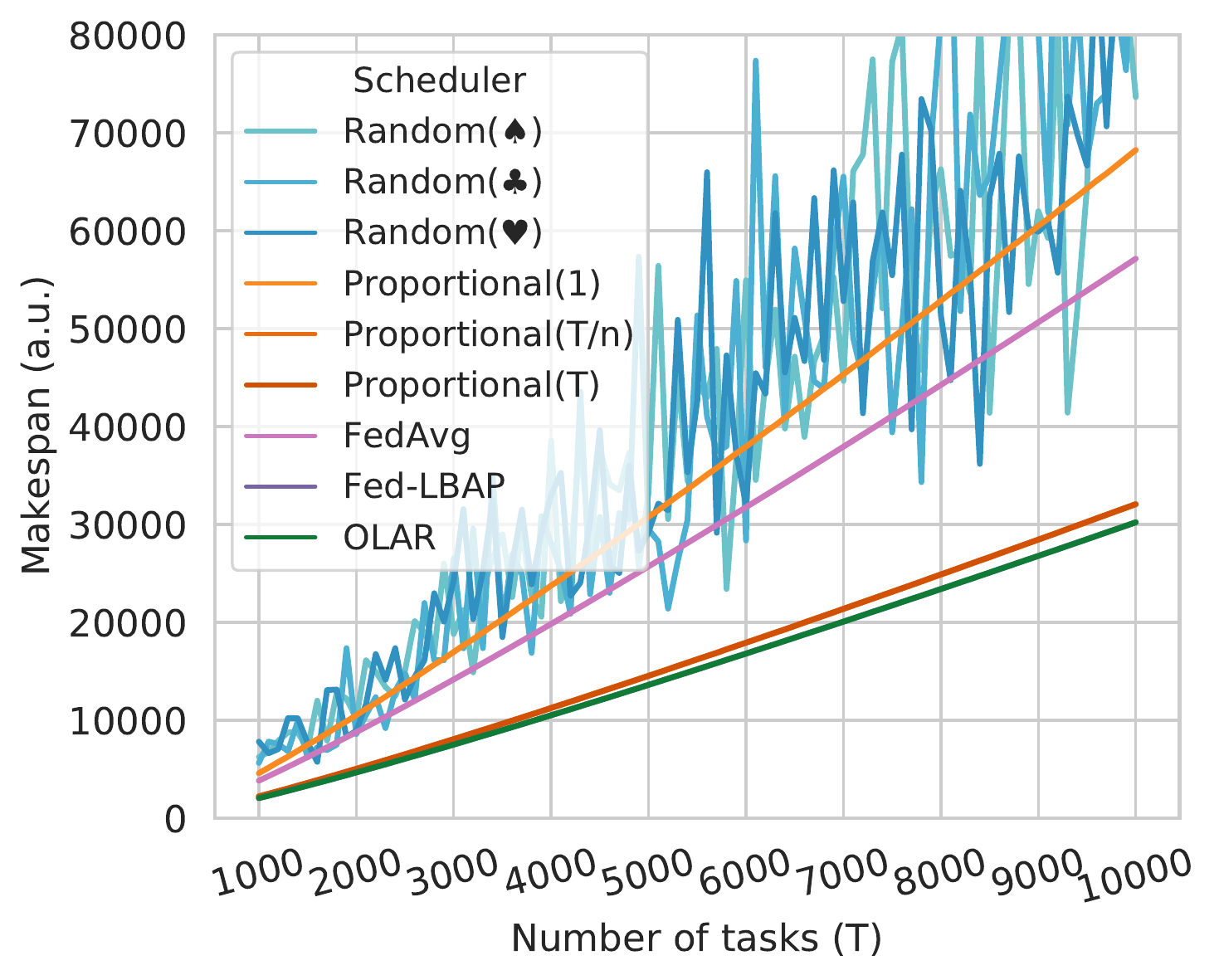}
      \caption{Nlogn costs, $\ResNum=10$.}
      \label{fig:s1-nlogn-10}
    \end{subfigure}\hfill 
    \begin{subfigure}{0.48\columnwidth}
      \includegraphics[width=\columnwidth]{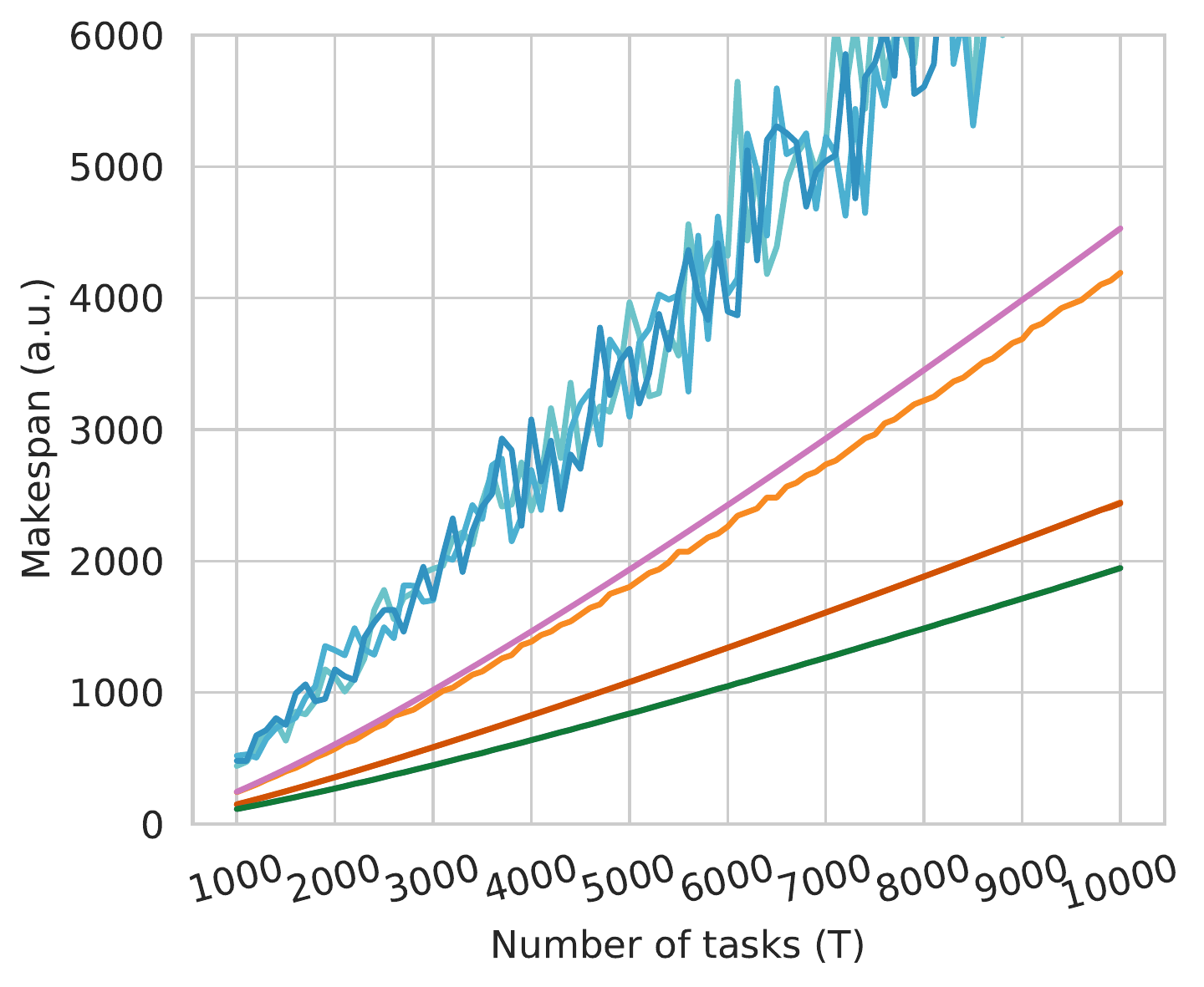}
      \caption{Nlogn costs, $\ResNum=100$.}
      \label{fig:s1-nlogn-100}
    \end{subfigure}
    \medskip
    \begin{subfigure}{0.48\columnwidth}
      \includegraphics[width=\columnwidth]{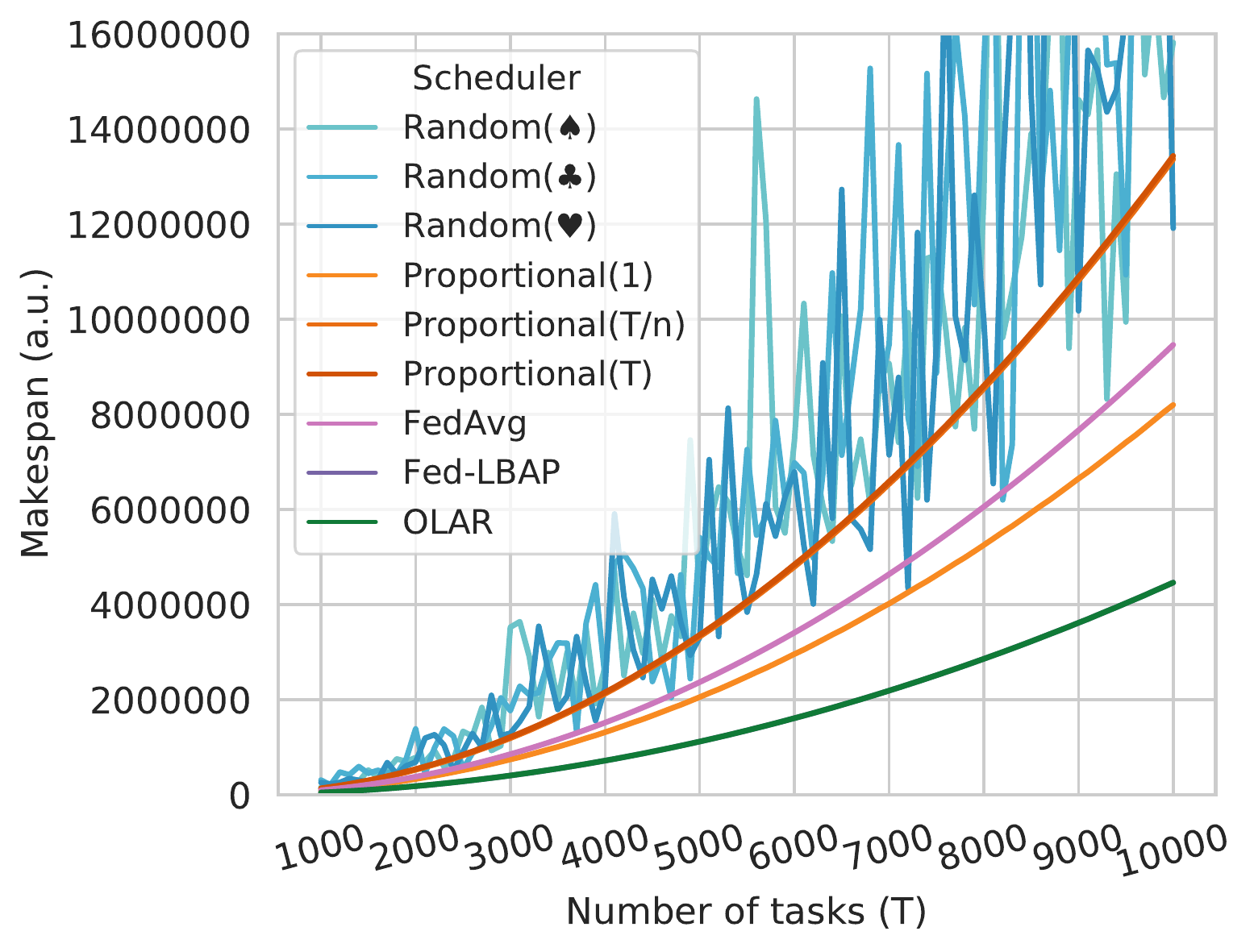}
      \caption{Quadratic costs, $\ResNum=10$.}
      \label{fig:s1-quadratic-10}
    \end{subfigure}\hfill 
    \begin{subfigure}{0.48\columnwidth}
      \includegraphics[width=\columnwidth]{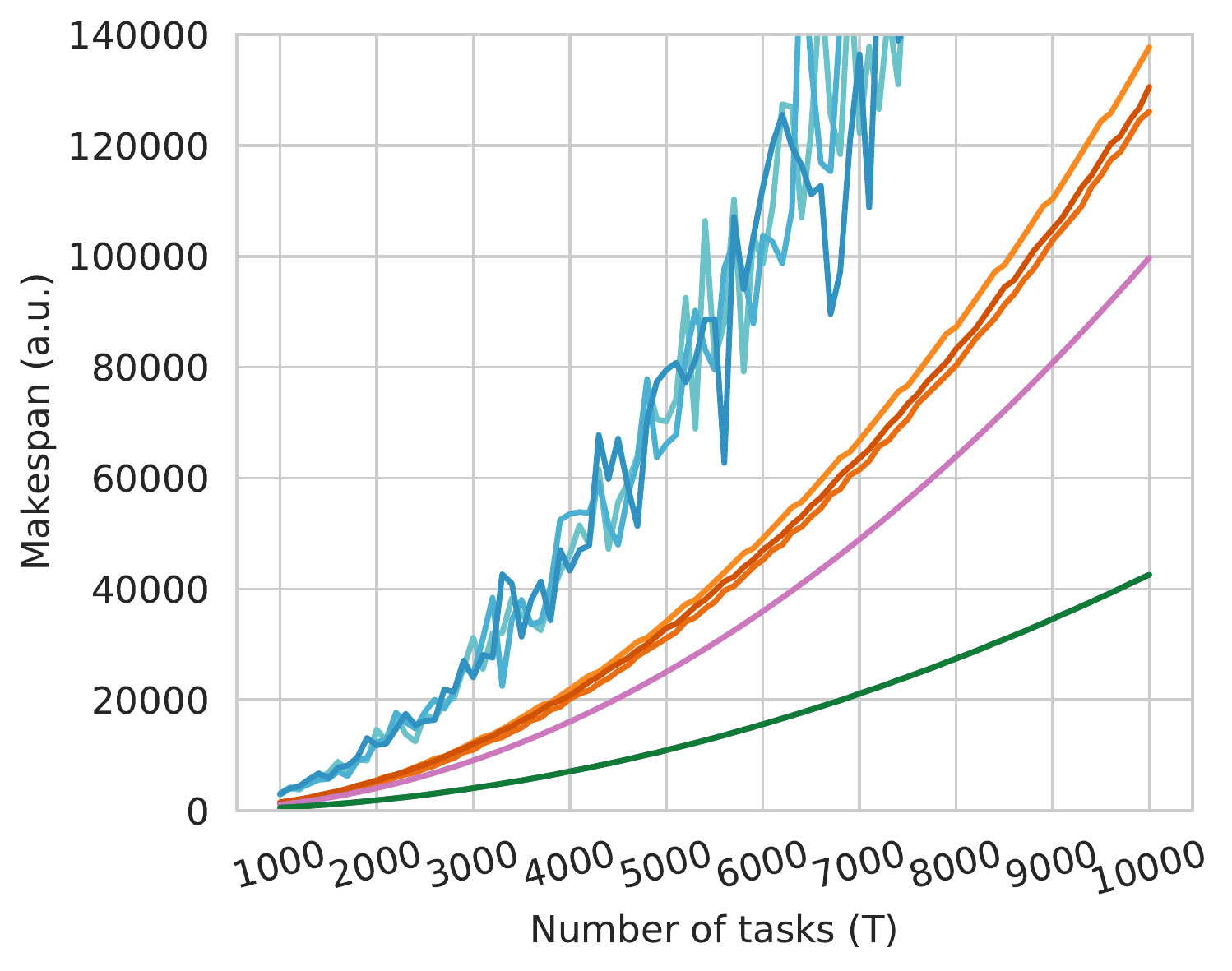}
      \caption{Quadratic costs, $\ResNum=100$.}
      \label{fig:s1-quadratic-100}
    \end{subfigure}
    \medskip
    \begin{subfigure}{0.48\columnwidth}
      \includegraphics[width=\columnwidth]{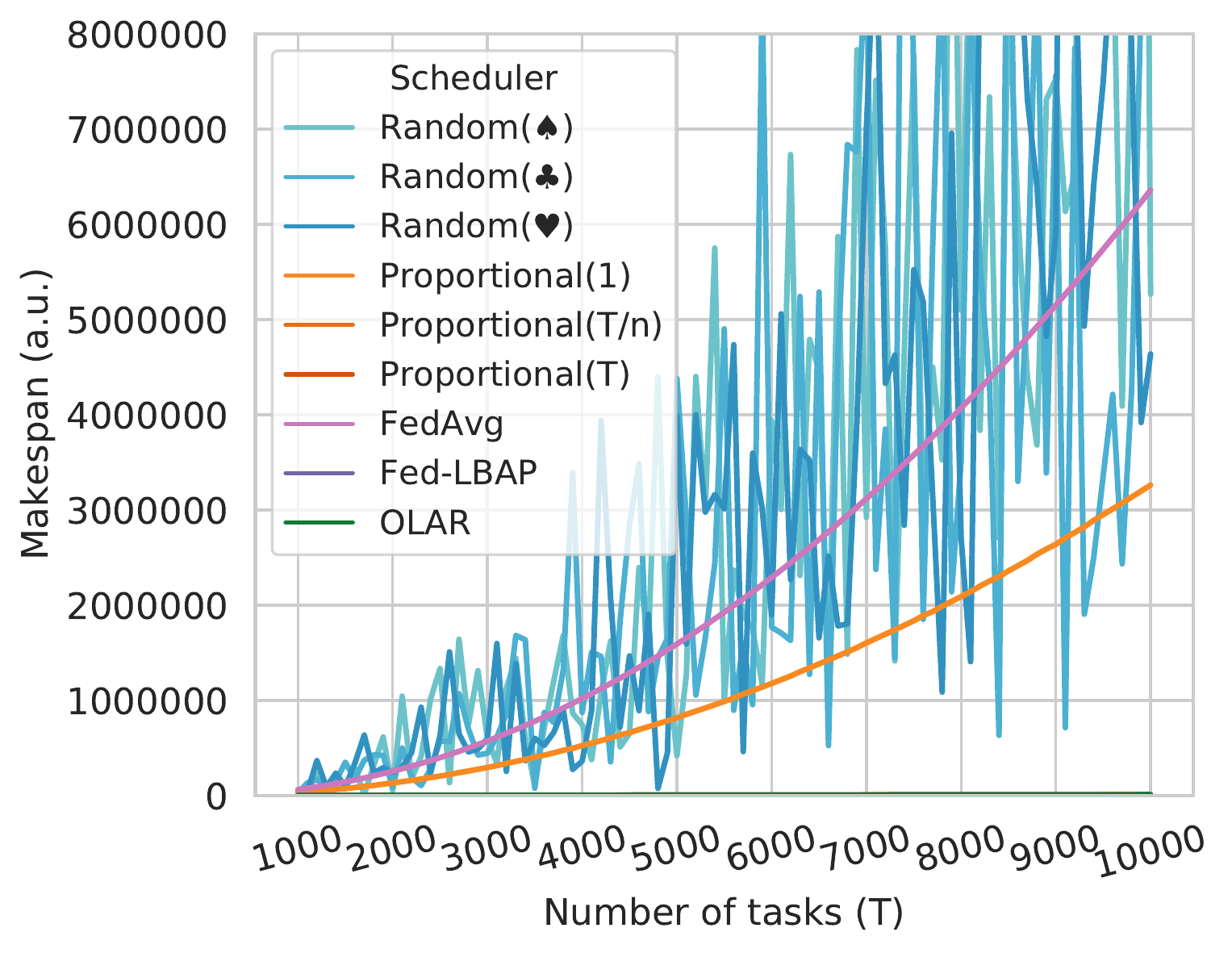}
      \caption{Mixed costs, $\ResNum=10$.}
      \label{fig:s1-mixed-10}
    \end{subfigure}\hfill 
    \begin{subfigure}{0.48\columnwidth}
      \includegraphics[width=\columnwidth]{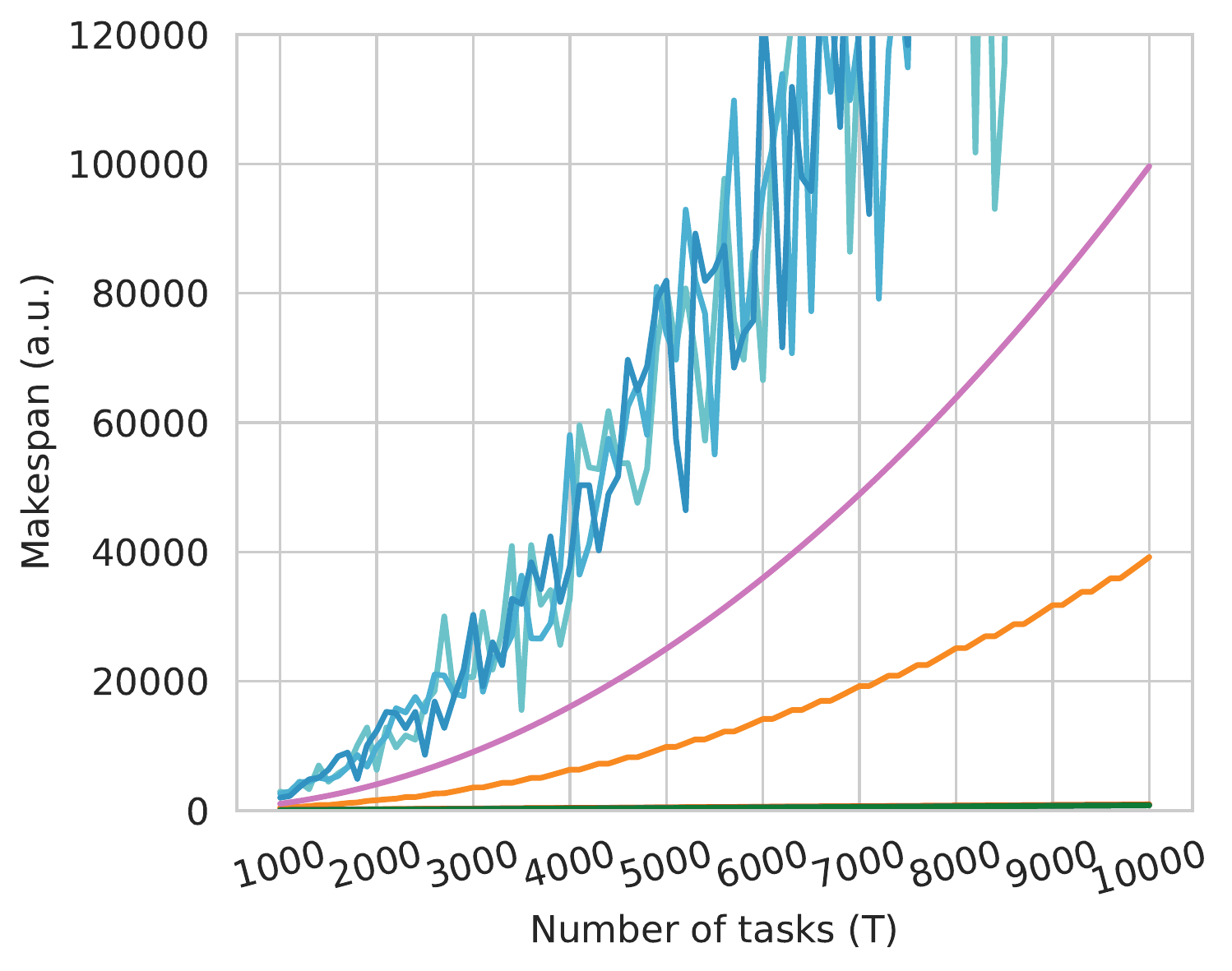}
      \caption{Mixed costs, $\ResNum=100$.}
      \label{fig:s1-mixed-100}
    \end{subfigure}
    \caption{Makespan results for Scenario~1.}
    \label{fig:s1}
\end{figure}

In this scenario, both OLAR and Fed-LBAP found optimal assignments for all cases, 
while none of the other algorithms found a single optimal assignment.
Regarding the other schedulers, we can notice that Random usually performs much worse than FedAvg. Additionally, their difference tends to increase as the number of tasks to be assigned grows, making it just not worth exploring Random assignments for this problem.
FedAvg performs close to optimal only for the resources with recursive costs (Figs.~\ref{fig:s1-recursive-10} and~\ref{fig:s1-recursive-100}), as this group is the most homogeneous among all the tested groups. In the other cases, assigning $\frac{\Tasks}{\ResNum}$~tasks to the most costly resource easily dominates the makespan.

Proportional performs close to optimal when resources with linear costs are present (Recursive, Linear, and Mixed groups) and $k$ is large.
This happens because it estimates linear costs for all resources, so large values of $k$ approximate well the real costs in these cases.
In general, the results with $k=\floor*{\frac{\Tasks}{\ResNum}}$ and $k=\Tasks$ are very similar (sometimes overlapping in the figures), while results with $k=1$ are much worse.
Nonetheless, when resources follow other behaviors, Proportional starts to get farther from the optimal solution.
This can be noticed for the groups with Nlogn costs (Figs.~\ref{fig:s1-nlogn-10} and~\ref{fig:s1-nlogn-100}), and especially for the resources with Quadratic costs, where it gets to perform worse than FedAvg (Fig.~\ref{fig:s1-quadratic-100}).

A final point to be taken from these results is that cost-aware scheduling algorithms are exceptionally important when resources follow different cost behaviors.
This is clearly illustrated in the results for resources with Mixed costs and $10$ resources (Fig.~\ref{fig:s1-mixed-10}), where FedAvg's makespan is about $800$ times worse than the optimal for $10,000$ tasks. 

\subsection{Scenario 2: (scheduling time, no limits)}
\label{subsec:s2}

The experiments in this scenario are split into two:
we first fix the number of resources at $100$, and vary the tasks from $1,000$ to $10,000$ in increments of $1,000$.
Then, we fix the number of tasks at $10,000$, and vary the number of resources from $100$ to $1,000$ in increments of $100$.
All resources follow Linear cost functions, as we assume that their costs should not have a major impact on the performance of the schedulers.
For each triple (scheduler, tasks, resources), we gather $50$ samples. Each sample is composed of $100$ runs of a scheduler measured using Python's \texttt{timeit} module.
The order that the samples are collected is randomized to reduce issues with interference and system jitter.

The average execution times for each triple are presented in Fig.~\ref{fig:s2}.
The vertical axis represents the execution time for each scheduler (ms, in log scale), while the horizontal axis represents the number of tasks and the number of resources in
Figs.~\ref{fig:s2-fixed-resources} and~\ref{fig:s2-fixed-tasks}, respectively.
Each scheduler is represented by a line connecting their execution times achieved for consecutive cases.
The average times for each scheduler for the smallest and largest cases are also presented in Tables~\ref{tab1} and~\ref{tab2}.


As can be noticed in these results, Random, Proportional, and FedAvg perform at least one, two, or three orders of magnitude faster than Fed-LBAP and OLAR.
This is to be expected, as the three algorithms perform $O(\ResNum)$ operations
(which is also the reason for no performance changes when increasing the number of tasks).
Moreover, these differences emphasize that generating random numbers is slower than only computing proportions, and computing proportions is slower than only doing a division.
The bumps seen for FedAvg in Fig.~\ref{fig:s2-fixed-tasks} come from the division $\frac{\Tasks}{\ResNum}$ leaving leftover tasks that have to be assigned in a second step.

\begin{figure}[!htb]
    \centering 
    \begin{subfigure}{0.48\columnwidth}
      \includegraphics[width=\columnwidth]{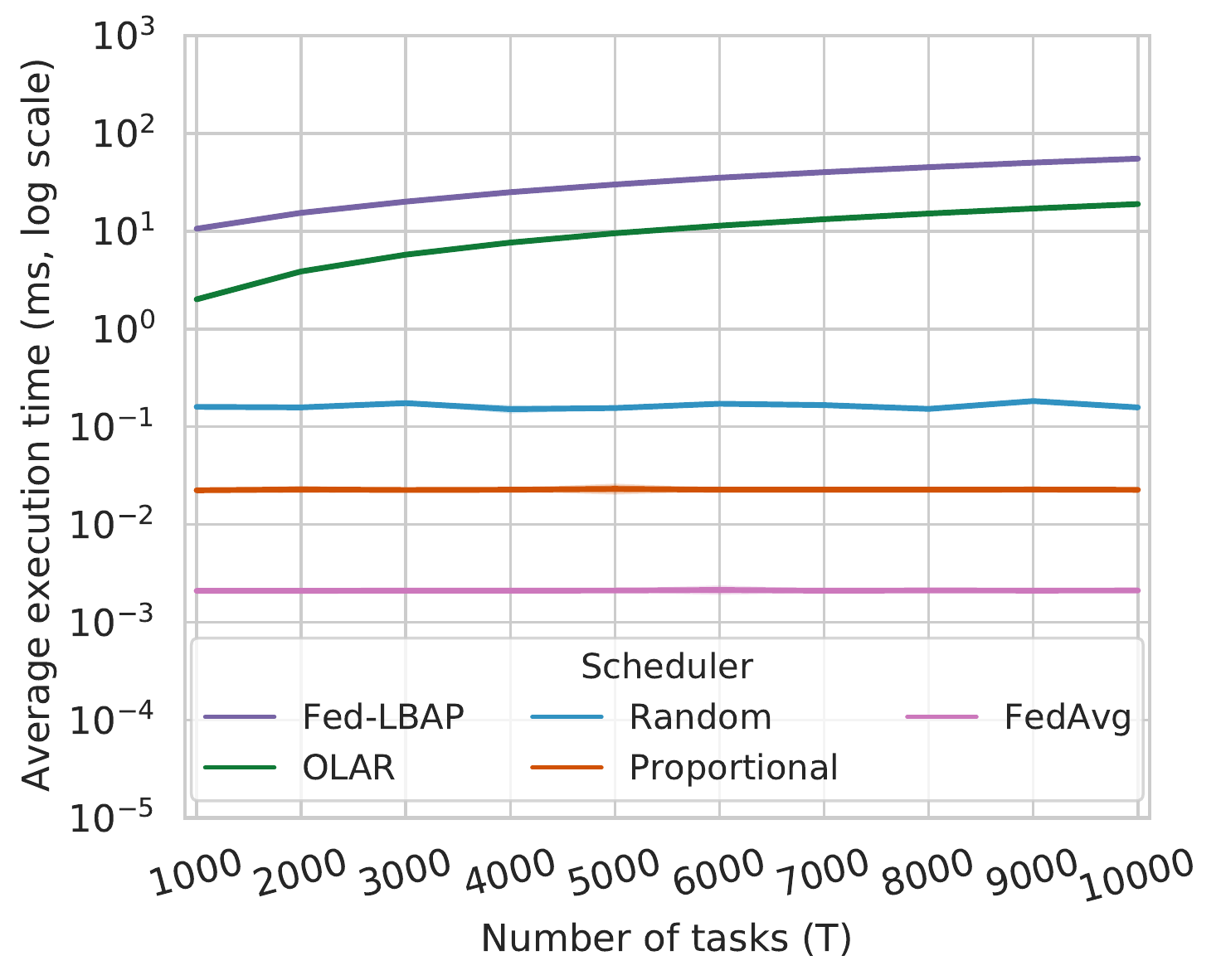}
      \caption{Fixed $\ResNum=100$.}
      \label{fig:s2-fixed-resources}
    \end{subfigure}\hfill 
    \begin{subfigure}{0.48\columnwidth}
      \includegraphics[width=\columnwidth]{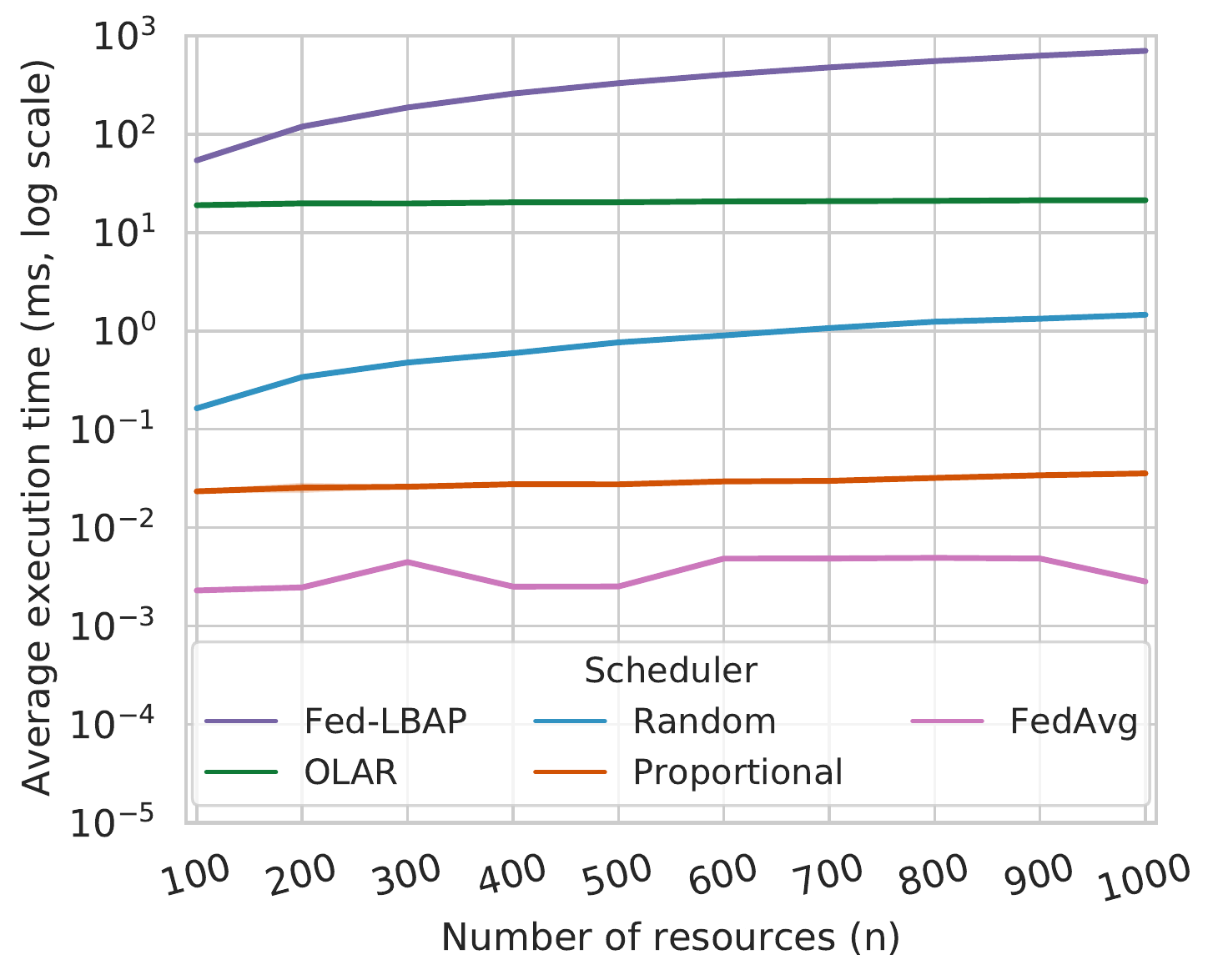}
      \caption{Fixed $\Tasks=10,000$.}
      \label{fig:s2-fixed-tasks}
    \end{subfigure}
     \caption{Scheduling algorithm times for Scenario~2.}
    \label{fig:s2}
\end{figure}

\begin{table}[!ht]
\caption{Average execution times (ms) with $\ResNum=100$.}
\resizebox{\columnwidth}{!}{
\begin{tabular}{r|rrrrr}
\multicolumn{1}{l}{\textit{\textbf{}}} & \multicolumn{1}{c}{\textbf{Fed-LBAP}} & \multicolumn{1}{c}{\textbf{OLAR}} & \multicolumn{1}{c}{\textbf{Random}} & \multicolumn{1}{c}{\textbf{Proportional}} & \multicolumn{1}{c}{\textbf{FedAvg}} \\
\midrule
\textit{$1,000$ tasks}                    & $10.592$                                & $2.009$                             & $0.160$                               & $0.022$                                     & $0.002$                               \\
\textit{$10,000$ tasks}                   & $55.091$                                & $18.930$                            & $0.158$                               & $0.023$                                     & $0.002$
\end{tabular}
}
\label{tab1}
\end{table}

\begin{table}[!ht]
\caption{Average execution times (ms) with $\Tasks=10,000$.}
\resizebox{\columnwidth}{!}{
\begin{tabular}{r|rrrrr}
\multicolumn{1}{l}{\textit{\textbf{}}} & \multicolumn{1}{c}{\textbf{Fed-LBAP}} & \multicolumn{1}{c}{\textbf{OLAR}} & \multicolumn{1}{c}{\textbf{Random}} & \multicolumn{1}{c}{\textbf{Proportional}} & \multicolumn{1}{c}{\textbf{FedAvg}} \\
\midrule
\textit{$100$ resources}                 & $54.134$                                & $18.953$                            & $0.163$                               & $0.023$                                     & $0.002$                               \\
\textit{$1,000$ resources}                & $702.978$                               & $21.318$                            & $1.459$                               & $0.036$                                     & $0.002$                              
\end{tabular}
}
\label{tab2}
\end{table}

Between the two algorithms that generate optimal assignments, we can notice that OLAR performs better than Fed-LBAP in all configurations.
We confirmed their difference using the Mann-Whitney U test with $5\%$ confidence (i.e., all comparisons rejected H0 with p-values $<0.05$, meaning that they come from different distributions).
This non-parametric test was chosen because some of the sampled results did not come from normal distributions (Kolmogorov-Smirnov tests with p-values $<0.05$).

We can also notice that the execution times of OLAR and Fed-LBAP grow at different rates.
When increasing the number of tasks (Fig.~\ref{fig:s2-fixed-resources} and Table~\ref{tab1}), OLAR's execution time increases by less than $2$~ms for every $1,000$ tasks, while Fed-LBAP increases by about $5$~ms.
On the other hand, when increasing the number of resources (Fig.~\ref{fig:s2-fixed-tasks} and Table~\ref{tab2}), OLAR's execution time goes from about $19$ to $21$~ms, while Fed-LBAP's time increases more than tenfold.
This is a natural result of OLAR's complexity ($\Theta(\ResNum+\Tasks\log \ResNum)$).
Finally, OLAR's execution time of up to tens of milliseconds is negligible when compared to the benefit of optimal scheduling for Federated Learning, as a round can easily take over tens of seconds~\cite{wang2020efficient}.

\subsection{Scenario 3: (achieved makespan, with limits)}
\label{subsec:s3}

With OLAR being the only scheduling algorithm that considers the lower and upper limits of resources, we had to extend the other algorithms from the state of the art for this comparison.
We refer to these extended versions by adding the prefix \textit{Ext-} to the original names of the algorithms.

\textbf{Ext-Fed-LBAP} supports limits by sorting only the costs for valid numbers of tasks --- i.e., for each resource $i \in \Resources$, only the costs for tasks in the interval $(\Lower{i},\Upper{i}]$ are considered.
This results in no change to the overall complexity or behavior of the algorithm.

Conversely, \textbf{Ext-FedAvg} and \textbf{Ext-Proportional} required bigger changes to their base algorithms, as solutions that would try to iteratively fix any invalid assignments could lead to $O(\ResNum^2)$ operations for pathological cases.
A solution in two steps was found for these algorithms.
First, we apply their base algorithms and check if their assignments are valid.
If that is the case, their assignments are returned and only $O(\ResNum)$ operations are required.
However, if the assignment is invalid, the second step follows an algorithm similar to OLAR, where the only difference are the kinds of value inserted in the minimum heap:
Ext-FedAvg uses the number of tasks missing to achieve the mean number of tasks per resource $\bar{\Tasks} \coloneqq \floor*{\frac{\Tasks}{\ResNum}}$ (i.e., $\Mapping{i}-\bar{\Tasks}$),
while Ext-Proportional($k$) uses the estimated cost for the next task using its linear regression (i.e., $(\Mapping{i}+1)\times \frac{\Cost{i}{k}}{k}$).
This makes both algorithms $\Theta(\ResNum+\Tasks\log \ResNum)$.


In order to inspect if the addition of limits would lead to any changes to the behavior of OLAR and the extended algorithms, we adapted a subset of the experiments from Scenario~1.
In this scenario, the algorithms had to schedule from $1,000$ to $10,000$ tasks (in increments of $100$) over $100$ resources. 
We only organized the heterogeneous resources in two group: one with all Linear costs, and one with all Quadratic costs.
For the upper and lower limits, we use the rules in Eqs.~\ref{eq:exp:low} and~\ref{eq:exp:up}, as they would lead to invalid assignments for the original scheduling algorithms.

\begin{equation}
\Lower{i} =
  \begin{cases}
    \frac{\bar{\Tasks}}{4} & \text{if~} i = \argmax\limits_{r \in \Resources} \Cost{r}{\Tasks} \\
    4 & \text{otherwise}
  \end{cases}
\label{eq:exp:low}
\end{equation}

\begin{equation}
\Upper{i} =
  \begin{cases}
    \frac{\bar{\Tasks}}{2} & \text{if~} i = \argmin\limits_{r \in \Resources} \Cost{r}{\Tasks} \\
    2\bar{\Tasks} & \text{otherwise}
  \end{cases}
\label{eq:exp:up}
\end{equation}


Fig.~\ref{fig:s3} presents the makespan achieved by OLAR and the extended algorithms in this scenario.
We can notice that these results follow the same general behavior of their counterparts without lower and upper limits in Scenario~1 (Section~\ref{subsec:s1}, Figs.~\ref{fig:s1-linear-100} and~\ref{fig:s1-quadratic-100}).
In all cases, OLAR and Ext-Fed-LBAP find optimal solutions, while none of the other schedulers ever found a single optimal assignment.
Ext-Proportional still found near-optimal solutions for the resources with Linear costs, but its assignments for the Quadratic costs diverges from the optimal.

\begin{figure}[!htb]
    \centering 
    \begin{subfigure}{0.48\columnwidth}
      \includegraphics[width=\columnwidth]{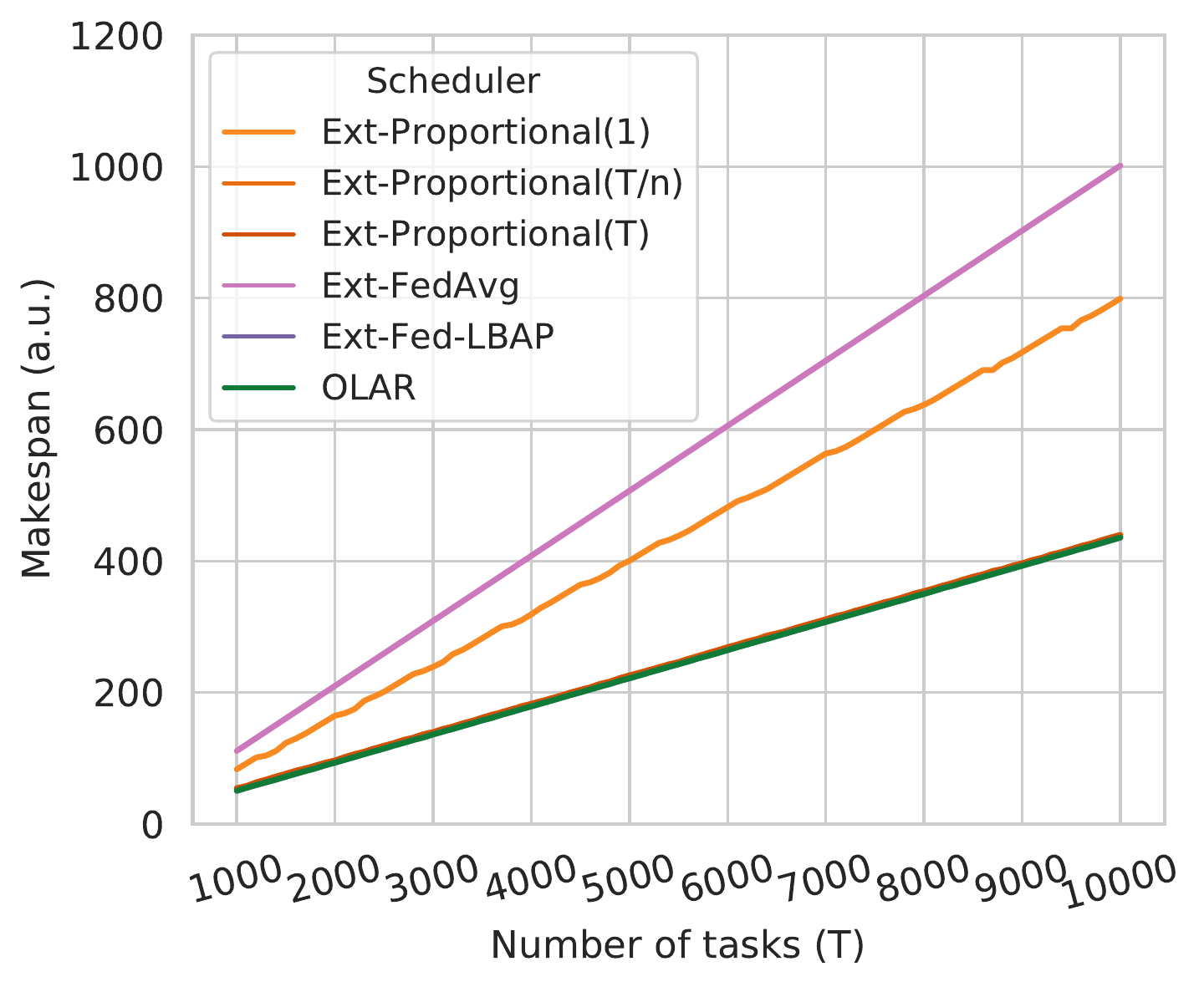}
      \caption{Linear costs, $\ResNum=100$.}
      \label{fig:s3-linear}
    \end{subfigure}\hfill 
    \begin{subfigure}{0.48\columnwidth}
      \includegraphics[width=\columnwidth]{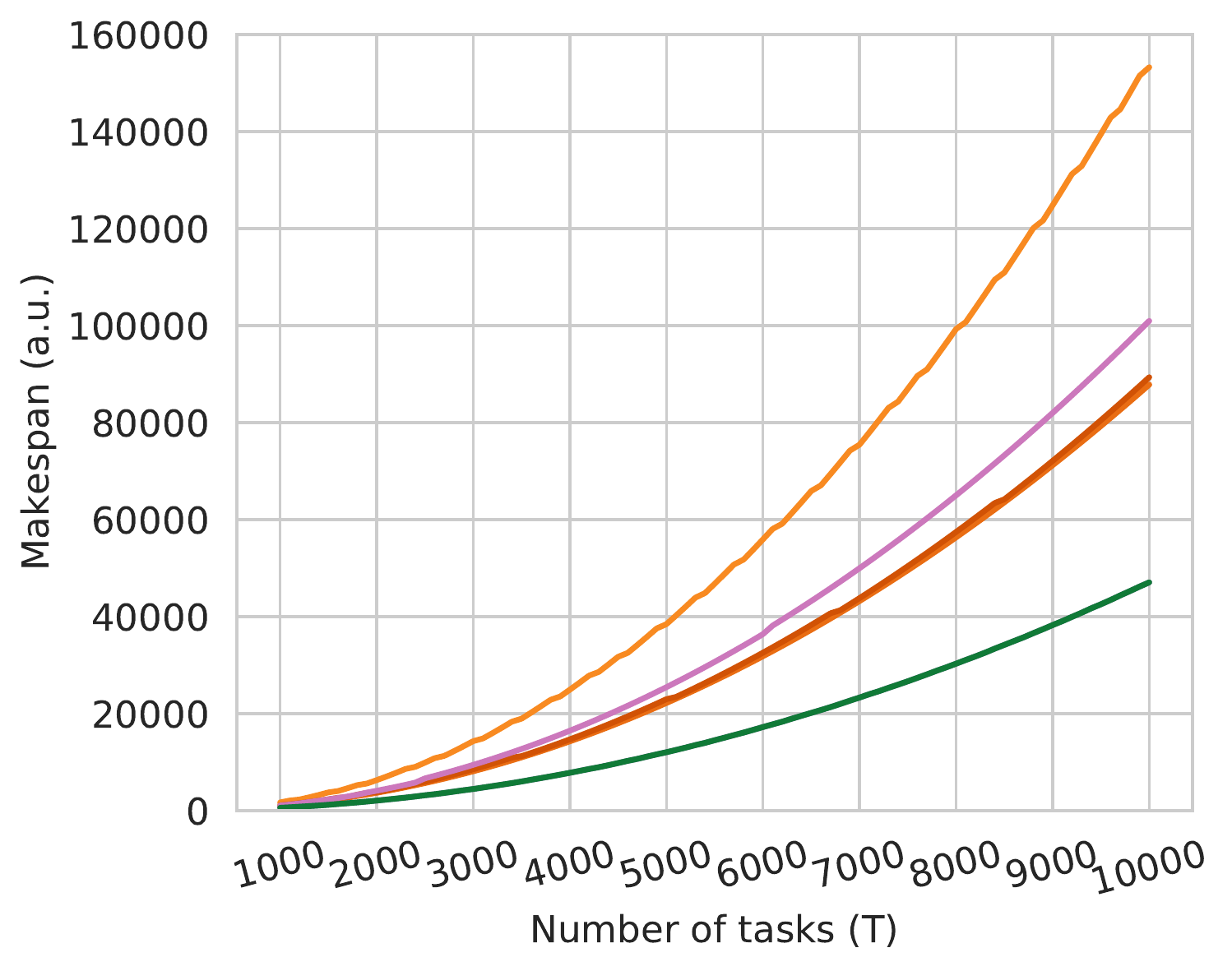}
      \caption{Quadratic costs, $\ResNum=100$.}
      \label{fig:s3-quadratic}
    \end{subfigure}
    \caption{Makespan results for Scenario~3.}
    \label{fig:s3}
\end{figure}

The only major difference seen in these results happened for Ext-Proportional($\floor*{\frac{\Tasks}{\ResNum}}$) and Ext-Proportional($\Tasks$) for Quadratic costs, as they were able to outperform Ext-FedAvg while their original counterparts were not able to do so.
In an additional investigation, we have found that the presence of lower and upper limits restricted the negative effects of the incorrect cost predictions made by Proportional (i.e., a linear regression for a quadratic function), leading to smaller makespans.
Nevertheless, this is not a rule, as the same behavior was not seen when scheduling tasks over $10$ resources only.

\subsection{Scenario 4: (scheduling time, with limits)}
\label{subsec:s4}

After verifying that the extended algorithms compute valid solutions and that their behavior is still similar to their original versions, we move our attention to the execution time of the schedulers in the presence of lower and upper limits of tasks per resource.
The experiments in this scenario follow the same steps of the experiments described in Section~\ref{subsec:s2} with the addition of the lower and upper limits described in Section~\ref{subsec:s3}.
The average execution times measured in this scenario are presented in Fig.~\ref{fig:s4}.
The vertical axis represents the execution time for each scheduler (in ms), while the horizontal axis represents the number of tasks and the number of resources in
Figs.~\ref{fig:s4-fixed-resources} and~\ref{fig:s4-fixed-tasks}, respectively.
Each scheduler is represented by a line connecting their execution times achieved for consecutive cases.
The average times for each scheduler for the smallest and largest cases are also listed in Tables~\ref{tab3} and~\ref{tab4}.

\begin{figure}[b]
    \centering 
    \begin{subfigure}{0.48\columnwidth}
      \includegraphics[width=\columnwidth]{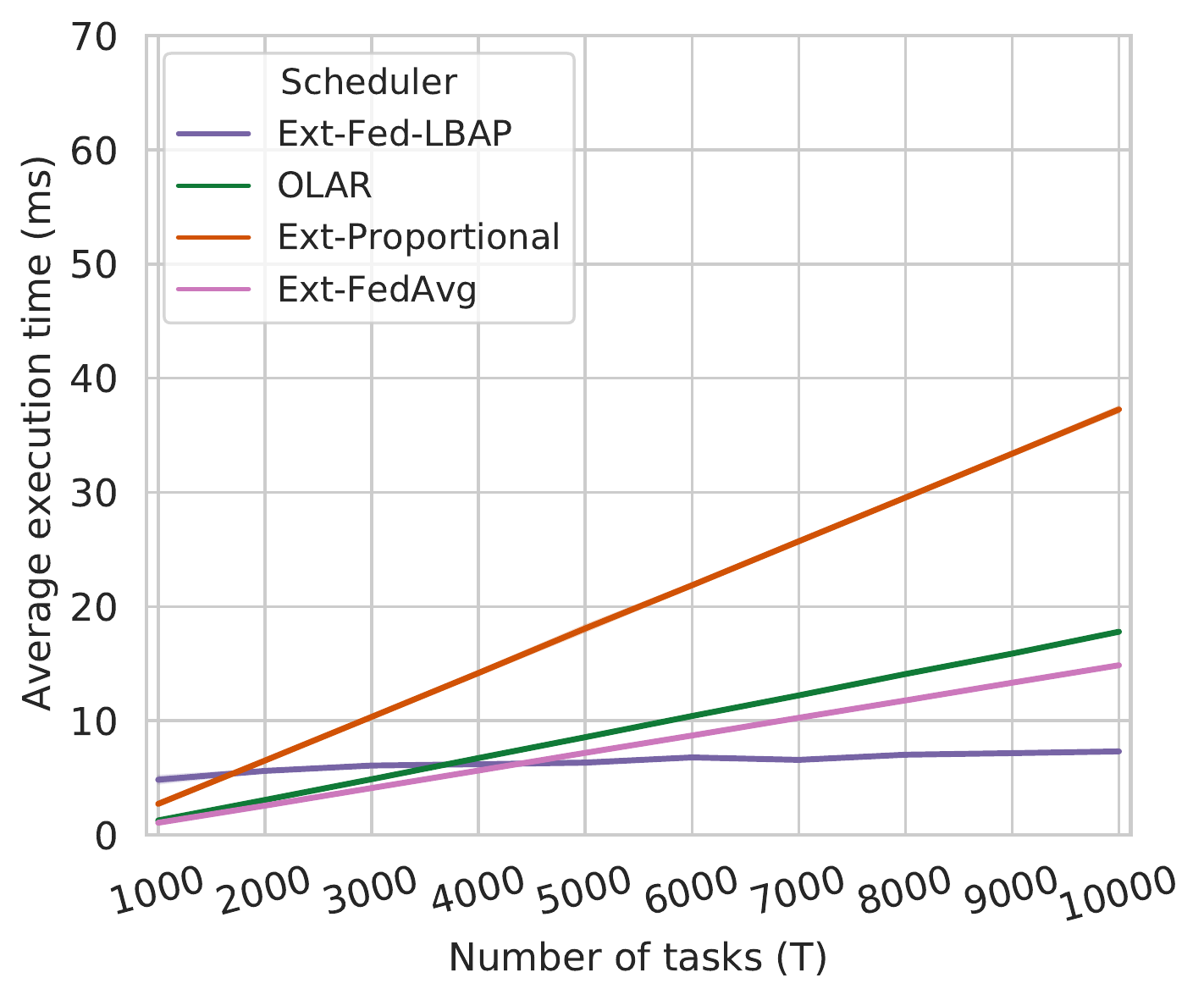}
      \caption{Fixed $\ResNum=100$.}
      \label{fig:s4-fixed-resources}
    \end{subfigure}\hfill 
    \begin{subfigure}{0.48\columnwidth}
      \includegraphics[width=\columnwidth]{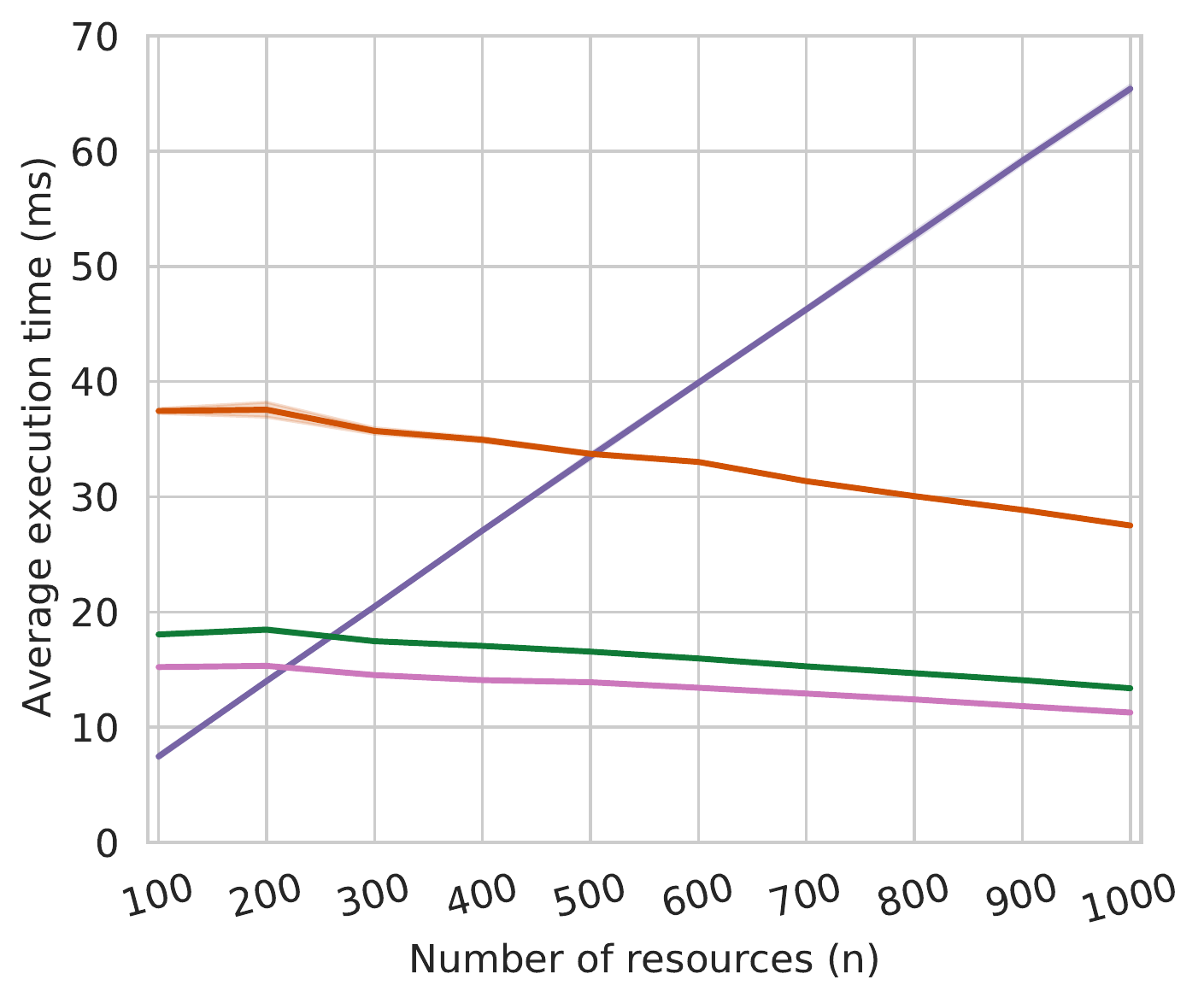}
      \caption{Fixed $\Tasks=10,000$.}
      \label{fig:s4-fixed-tasks}
    \end{subfigure}
     \caption{Scheduling algorithm times for Scenario~4.}
    \label{fig:s4}
\end{figure}

The first major change that can be observed in these results (in comparison to the results in Section~\ref{subsec:s2}) is that the execution times of Ext-Proportional and Ext-FedAvg are now closer to OLAR and Ext-Fed-LBAP.
This is a natural effect of their change in complexity from $O(\ResNum)$ to $O(\ResNum+\Tasks\log \ResNum)$ with the extension.
We can also see that Ext-Proportional seems to take longer than OLAR and Ext-FedAvg to compute its schedule, which is an effect of the kinds of values and operations each uses in their minimum heap.
Nevertheless, the three follow the same general behavior in all cases.

\begin{table}[t]
\caption{Average execution times (ms) with $\ResNum=100$.}
\resizebox{\columnwidth}{!}{
\begin{tabular}{r|rrrr}
\multicolumn{1}{l}{\textit{\textbf{}}} & \multicolumn{1}{c}{\textbf{Ext-Fed-LBAP}} & \multicolumn{1}{c}{\textbf{OLAR}} & \multicolumn{1}{c}{\textbf{Ext-Proportional}} & \multicolumn{1}{c}{\textbf{Ext-FedAvg}} \\
\midrule
\textit{$1,000$ tasks}                    & $4.827$                                     & $1.258$                             & $2.708$                                         & $1.056$                                   \\
\textit{$10,000$ tasks}                   & $7.304$                                     & $17.786$                            & $37.259$                                        & $14.851$
\end{tabular}
}
\label{tab3}
\end{table}

\begin{table}[t]
\caption{Average execution times (ms) with $\Tasks=10,000$.}
\resizebox{\columnwidth}{!}{
\begin{tabular}{r|rrrr}
\multicolumn{1}{l}{\textit{\textbf{}}} & \multicolumn{1}{c}{\textbf{Ext-Fed-LBAP}} & \multicolumn{1}{c}{\textbf{OLAR}} & \multicolumn{1}{c}{\textbf{Ext-Proportional}} & \multicolumn{1}{c}{\textbf{Ext-FedAvg}} \\
\midrule
\textit{$100$ resources}                 & $7.448$                                     & $18.044$                            & $37.448$                                        & $15.211$                                  \\
\textit{$1,000$ resources}                & $65.420$                                    & $13.373$                            & $27.503$                                        & $11.266$
\end{tabular}
}
\label{tab4}
\end{table}

OLAR and Ext-Fed-LBAP show new behaviors in Figs.~\ref{fig:s4-fixed-resources} and~\ref{fig:s4-fixed-tasks}.
In the first figure, we can see that OLAR's execution time grows almost linearly, while Ext-Fed-LBAP's time stays almost constant.
This leads to a situation where OLAR performs better up to $3,000$ tasks, while Ext-Fed-LBAP performs better for higher numbers of tasks (their difference was confirmed with a Mann-Whitney U test with $5\%$~confidence).
This difference comes from the use of upper limits based on the mean number of tasks per resource $\bar{\Tasks}$ (Eq.~\eqref{eq:exp:up}).
While this has no major advantage for OLAR, Ext-Fed-LBAP sees a reduction in the size of its cost matrix.
It is reduced from $\ResNum\times \Tasks$ to $\UpV \approx \ResNum\frac{\bar{\Tasks}}{2} = \ResNum\frac{\Tasks}{2\ResNum} = \frac{\Tasks}{2}$.
This affects the number of operations for sorting the matrix and for the binary search, leading to the performance we see.

In the second case (Fig~\ref{fig:s4-fixed-tasks}), we can see that Ext-Fed-LBAP's execution time grows almost linearly, while OLAR's time actually decreases with the growing number of resources.
This leads to a situation where Ext-Fed-LBAP performs the best up to $200$ resources, and then its execution time surpasses OLAR (their difference was confirmed again with a Mann-Whitney U test with $5\%$~confidence).
This, in turn, comes from the use of lower limits based on a fixed number of tasks (Eq.~\eqref{eq:exp:low}).
This has only a marginal effect for Ext-Fed-LBAP, but OLAR, Ext-Proportional, and Ext-FedAvg see a reduction in the number of iterations on their second loop.
This reduction is based on $\LowV \approx 4\ResNum$, meaning that the loop runs only for $\Tasks-4\ResNum$ iterations.
We have confirmed that OLAR's reduction in execution time is related to the use of lower limits by running an additional version of this experiment with no limits, only lower or upper limits, and using both limits.
The average execution times for $100$ and $1,000$ resources are summarized in Table~\ref{tab5}. It shows that the only factor influencing these execution times is the use of lower limits.

\begin{table}[!ht]
\caption{Average execution times (ms) for OLAR with $\Tasks=10,000$ and variations in lower and upper limits.}
\resizebox{\columnwidth}{!}{
\begin{tabular}{r|cccc}
\multicolumn{1}{l}{\textit{\textbf{}}} & \multicolumn{1}{c}{\textbf{No limits}} & \multicolumn{1}{c}{\textbf{Upper limits}} & \multicolumn{1}{c}{\textbf{Lower limits}} & \multicolumn{1}{c}{\textbf{Both limits}} \\
\midrule
\textit{$100$ resources}               & $19.438$                                 & $19.303$                                    & $18.600$                                    & $18.498$                                   \\
\textit{$1,000$ resources}                & $21.674$                                 & $21.973$                                    & $13.620$                                    & $13.664$                                  
\end{tabular}
}
\label{tab5}
\end{table}

Based on the results from Scenarios~2 and~4,
we can conclude that the presence of lower and upper limits have a beneficial impact on the performance of OLAR and Fed-LBAP.
Nevertheless, the adaptation of $O(\ResNum)$ heuristics to handle limits removes any benefits that their small execution times has brought before.
In this sense, in a situation where costs are known (or properly estimated) and limits should be respected, there would be no reason to use an algorithm to schedule tasks other than the proved optimal OLAR.

\section{Conclusion and Future Work}
\label{sec:conc}

In this paper, we considered the problem of scheduling data to devices in order to minimize the duration of rounds for Federated Learning~(FL).
This problem is very current and relevant, with FL systems growing in scale, thus becoming more easily affected by slowdowns due to stragglers.
We have modeled this problem as a makespan minimization problem with identical, independent, and atomic tasks that have to be assigned to heterogeneous resources with non-decreasing cost functions while respecting lower and upper limits of tasks per resource.
We have proposed a solution to this problem with OLAR (\textit{OptimaL Assignment of tasks to Resources}), a $\Theta(\ResNum+\Tasks\log\ResNum)$ algorithm (for $\Tasks$ tasks and $\ResNum$ resources), which we also proved to be optimal.

In an extensive experimental evaluation (including other algorithms from the state of the art and new extensions to them), we have shown that OLAR can compute optimal assignments of $10,000$ tasks to $1,000$ resources in tens of milliseconds, which is negligible when compared to training rounds that take tens of seconds~\cite{wang2020efficient}.
We have also found that OLAR outperforms Fed-LBAP~\cite{wang2020optimize} and its extended version in most scenarios.
While a Proportional scheduling algorithm has been found to achieve close to optimal assignments when resources follow mostly linear cost functions, our experiments have demonstrated that the execution time benefits of such a heuristic disappear when lower and upper limits of tasks per resource have to be taken into account.
Overall, we can conclude that, given the opportunity, OLAR would be the preferred algorithm for this scheduling problem.


We see some possibilities that would require further study in the future.
First, we would like to conduct experiments in a real FL platform using mobile devices to see how beneficial OLAR can be.
We also plan to investigate ways to adapt OLAR to other scheduling problems in FL that include objectives such as energy consumption reduction and convergence acceleration.
Finally, seeing that Ext-Fed-LBAP also found optimal schedules, and that it had execution times smaller than OLAR for some proportions of tasks per resource when upper limits were present, we plan to try to find a proof of optimality for this algorithm, and to study a way to combine it with OLAR in order to always produce the shortest algorithm's execution times with optimal solutions.

\section*{Acknowledgment}

The author would like to thank Dr. Francieli Z. Boito and Dr. St\'efano D.K. M\'or for their feedback on this manuscript.

\bibliographystyle{IEEEtran}
\bibliography{llpilla}

\appendices
\section{Experimental Evaluation Details for Reproducibility}
\label{appendix:rep}

\textbf{Hardware}: experiments were executed on a Dell Latitude 7390 notebook with an Intel Core i5~8250U processor, 16GB of DDR4 RAM (1$\times$16GB, 2400MHz), and an Intel PCIe SSD with 512GB of capacity.
The computer was plugged to a power source at all times, and the processor was set to run at maximum clock frequency and Turbo Boost.

\textbf{Software}: the computer runs Ubuntu 18.04.5 LTS (kernel version 4.15.0-117-generic). We used Python 3.6.9 with \texttt{numpy} version 1.19.2 for the experiments. Modules \texttt{matplotlib} (3.3.1), \texttt{pandas} (1.1.2), \texttt{seaborn} (0.11.0), and \texttt{scipy} (1.5.1) were used for the visualization and statistical analysis of results.
During timing experiments, no other applications were open besides a terminal.

\textbf{Random number generator (RNG) seeds}: RNG seeds were used to control the generation of parameters for the cost functions of the resources, for the scheduling decisions of the Random scheduler, and for shuffling the order of experiments in the Scenarios~2 and~4.
The seeds for the Random scheduler and for shuffling are increased by one for each new set of inputs.
Seeds were set using \texttt{np.random.seed}. 
Here is a list of the values used per experiment.

\textit{Scenario 1}.
\begin{itemize}
\item Recursive costs: resources: $[0,99]$, 
Random scheduler ($\spadesuit:1000, \clubsuit:2000, \varheartsuit:3000$); 
\item Linear costs: resources: $[100,199]$, 
Random scheduler ($\spadesuit:4000, \clubsuit:5000, \varheartsuit:6000$);
\item Nlogn costs: resources: $[200,299]$, 
Random scheduler ($\spadesuit:7000, \clubsuit:8000, \varheartsuit:9000$);
\item Quadratic costs: resources: $[300,399]$, 
Random scheduler ($\spadesuit:10000, \clubsuit:11000, \varheartsuit:12000$);
\item Mixed costs: resources: $[400,499]$, 
Random scheduler ($\spadesuit:13000, \clubsuit:14000, \varheartsuit:15000$).
\end{itemize}

\textit{Scenario 2}.
\begin{itemize}
\item Fixed resources: resources: $[0,99]$, Random scheduler: $1000$, Shuffle: $0$;
\item Fixed tasks: resources: $[0,999]$, Random scheduler: $1000$, Shuffle: $1000$.
\end{itemize}

\textit{Scenario 3}. 
\begin{itemize}
\item Linear costs: resources: $[600,699]$.
\item Quadratic costs: resources: $[700,799]$.
\end{itemize}

\textit{Scenario 4}. 
\begin{itemize}
\item Fixed resources: resources: $[0,99]$, Shuffle: $1000$;
\item Fixed tasks: resources: $[0,999]$, Shuffle: $2000$.
\end{itemize}

\textbf{Fed-LBAP}: a series of minor changes had to be made on the algorithm described in~\cite{wang2020optimize} in order for it to converge to a solution. 
They include: 
computing the \textit{median} and $D'$ at the beginning of the loop (not before the loop);
changing the stop condition of the main loop (\textit{min} is always smaller than \textit{max} when using the \textit{floor} operator for the median);
checking if $D' \ge D$ when the algorithm stops, or else recomputing the solution using \textit{median}$\leftarrow$\textit{max}.
For extended-Fed-LBAP, we also remove tasks from resources if the algorithm assigns more tasks than requested.
This happens in specific cases where multiple resources have costs equal to $\Makespan$, or when the lower limit of a resource sets the makespan.
For instance, if $\Resources = \{0,1\}$, $\Tasks=3$, $\Costs{0}{0,0.5, 1, 1.5}$, $\Costs{1}{0, 0.7, 1, 1.3}$, Fed-LBAP would assign 4~tasks ($\Mapping{0}=2, \Mapping{1}=2$) for a $\Makespan=1$.

\balance

\end{document}